\newtheorem{theorem}{Theorem}
\newtheorem{corollary}{Corollary}
\newtheorem{definition}{Definition}
\newcommand{\indep}{\perp \!\!\! \perp}
\newcommand{\notindep}{\not \! \perp \!\!\! \perp}
\title{Hierarchical Topological Ordering with Conditional Independence Test for Limited Time Series}
\author{ 
Anpeng Wu\\
Zhejiang University\\
\texttt{anpwu@zju.edu.cn}\\
\And
Haoxuan Li\\
Peking University\\
\texttt{hxli@stu.pku.edu.cn}\\
\And
Kun Kuang\\
Zhejiang University\\
\texttt{kunkuang@zju.edu.cn}\\
\And
Keli Zhang\\
Huawei Noah's Ark Lab\\
\texttt{zhangkeli1@huawei.com}\\
\And
Fei Wu\\
Zhejiang University\\
\texttt{wufei@zju.edu.cn}\\
}
\begin{document}

\maketitle

\begin{abstract}
    Learning directed acyclic graphs (DAGs) to identify causal relations underlying observational data is crucial but also poses significant challenges. Recently, topology-based methods have emerged as a two-step approach to discovering DAGs by first learning the topological ordering of variables and then eliminating redundant edges, while ensuring that the graph remains acyclic. However, one limitation is that these methods would generate numerous spurious edges that require subsequent pruning. To overcome this limitation, in this paper, we propose an improvement to topology-based methods by introducing limited time series data, consisting of only two cross-sectional records that need not be adjacent in time and are subject to flexible timing. By incorporating conditional instrumental variables as exogenous interventions, we aim to identify descendant nodes for each variable. Following this line, we propose a hierarchical topological ordering algorithm with conditional independence test (HT-CIT), which enables the efficient learning of sparse DAGs with a smaller search space compared to other popular approaches. The HT-CIT algorithm greatly reduces the number of edges that need to be pruned. Empirical results from synthetic and real-world datasets demonstrate the superiority of the proposed HT-CIT algorithm.
\end{abstract}

\section{Introduction}
\label{sec:intro}

Learning causal relations from observational data is crucial across various scientific disciplines, such as epidemiology~\cite{vandenbroucke2016causality}, economics~\cite{pearl2009causality}, biology~\cite{triantafillou2017predicting}, and social science~\cite{malinsky2018causal}. 
This enables researchers to make informed decisions and deepen their understanding of the underlying causal structure of the data~\cite{peters2017elements,pearl2009causality}. 
Traditionally, constraint-based methods \cite{spirtes2000causation,zhang2008completeness,ramsey2012adjacency} use conditional independence tests (CIT) to identify causal relations, score-based methods \cite{tsamardinos2006max,ke2019learning,zhu2020causal} search through the space of all possible causal structures with the aim of optimizing a specified metric, and continuous-optimization methods~\cite{zheng2018NOTEARS, Lachapelle2020Gradient-Based} view the search as a constrained optimization problem and apply first-order optimization methods to solve it. 
However, discovering the underlying DAG with a greedy combinatorial optimisation method can be expensive and challenging due to the super-exponential growth of the set of DAGs with the number of nodes~\cite{teyssier2005ordering, rolland2022score}. 

Recently, topology-based methods~\cite{teyssier2005ordering,peters2014causal, loh2014high, park2017bayesian, ghoshal2018learning} develop a two-stage method to speed up the combinatorial search problem over the space of DAGs, under Gaussian additive models.
Firstly, they learn a topological ordering of the nodes, in which a node in the ordering can only be a parent to nodes that appear after it in the same ordering; then, the target DAG is constructed by adhering to the topological ordering and pruning any unnecessary edges. Once a topological order is established, the acyclicity constraint is automatically upheld without further optimisation~\cite{buhlmann2014cam, rolland2022score}. 
Nevertheless, single cross-sectional data alone is generally not efficient for identifying the DAG \cite{yang2018characterizing}.  
In line with these work, SCORE and DiffAN \cite{rolland2022score, sanchez2022diffusion} uses the Hessian of the data log-likelihood to iteratively identify and remove leaf nodes to find a complete topological ordering (Fig.~\ref{fig:figure1}(b)). They restrict the number and direction of possible edges in the learned DAG, but typically creates a topological ordering with many spurious edges that must be pruned. 
Additionally, it should be noted that they may not necessarily yield a unique solution.
This pose potential difficulties for pruning spurious edges and result in errors for learning DAGs, reflected in the gradually deteriorating performance of the SCORE method as the number of nodes increases in the experiment section.

\vspace{-2pt}
Fortunately, it is feasible to acquire two temporal data slices within a brief duration, which help topological ordering for learning Directed Acyclic Graphs. In such cases, as shown in Fig.~\ref{fig:figure1}(a), we assume the 
previous state of a node only affects its own state and its descendants in current time, and the causality on nodes remain invariant. It is common in real-world application,  such as in power systems, interrelationships between malfunctions and their associated operations remain unchanged and only be subject to a time-lagged effect from their own previous state, over a brief period.
Then, any perturbation of the previous state on a node is transmitted to itself and its descendant nodes at the current moment, which can be regarded as a conditional instrumental variable (CIV). Based on this, this paper demonstrates a single conditional independence test per variable is sufficient to build a more efficient unique hierarchical topological ordering with merely a few spurious edges (Fig.~\ref{fig:figure1}(c)). The search space over the learned hierarchical topological ordering is much smaller than that of SCORE \cite{rolland2022score}. Then the spurious edges to descendant nodes can be pruned by feature selection algorithms, such as CAM \cite{buhlmann2014cam}, to directly yield an asymptotic directed acyclic graph.


\begin{figure}[t]
\centering
\includegraphics[width=0.92\columnwidth]{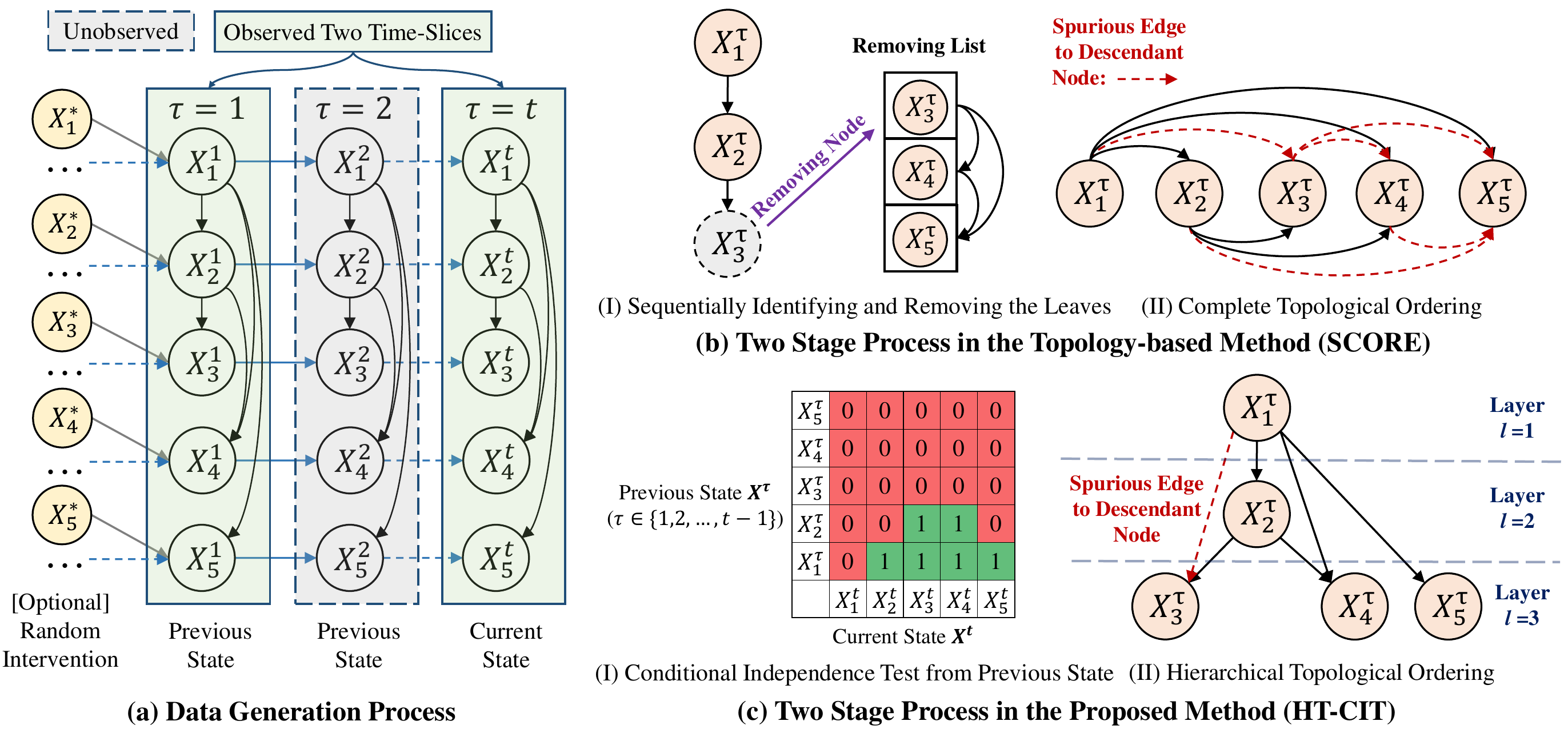}
\vspace{-4pt}
\caption{Two time-slices environment and comparison of SCORE and HT-CIT architecture. }
\vspace{-13pt}
\label{fig:figure1}
\end{figure}


\vspace{-2pt}
In general, the identification result of hierarchical topological ordering in this paper arises not from a multivariate continuous Additive Noise Model (ANM) but rather from the independence properties of auxiliary variables in previous state. In such cases, in which a previous state is observed or a random intervention is implemented for each variable (Fig.~\ref{fig:figure1}(a)), we propose a hierarchical topological ordering algorithm with conditional independence test to accurately identify the underlying DAG by adhering to the learned unique topological ordering and pruning unnecessary edges. This improve upon existing methods by significantly simplifying the process of finding a topological ordering and providing a more efficient topological ordering with high quality. 

\vspace{-2pt}
The main contributions in this paper are as follows:
\vspace{-4pt}
\begin{itemize} 
  \item Theoretically, we prove that if we observe a previous state or implement a random intervention for each variable, a single conditional independence test per variable is sufficient to distinguish between its descendants and non-descendant nodes in topological ordering. 
  \item Based on this, we propose HT-CIT, a novel identifiable topological sorting algorithm for a unique hierarchical topological ordering. The search space for underlying DAGs over the learned hierarchical topological ordering is much smaller than that of SCORE.  
  \item The empirical experiments demonstrate the superiority of our algorithm in synthetic data. The detected causality in real CMR application is in accordance with the consensus and can provide new insights for personalized policy-decision. 
\end{itemize}

\section{Related work}

Conventional methods rely on certain assumptions to uncover the true underlying DAG, including causal Markov condition, faithfulness, causal sufficiency, and additive noise models. Constraint-based methods \cite{spirtes2000causation,zhang2008completeness,ramsey2012adjacency} typically rely on conditional independence tests to identify causal relationships by testing the independence between variables given a set of conditions~\cite{spirtes2000causation, sun2007kernel, hyttinen2014constraint}. By testing different sets of conditions, these methods can identify the causal relationships between variables by determining which variables are dependent or independent. Examples of constraint-based methods include PC, FCI, SGS, and ICP~\cite{spirtes2000causation, peters2016causal}. Besides, score-based methods \cite{tsamardinos2006max,ke2019learning,zhu2020causal} search through the space of all possible causal structures with the aim of optimizing a specified metric, and rely on local heuristics to enforce the acyclicity, such as GES, and GIES~\cite{chickering2002optimal,hauser2012characterization}. Continuous-optimization methods ~\cite{zheng2018NOTEARS, Lachapelle2020Gradient-Based} view the search as a constrained optimization problem and apply first-order optimization methods to solve it, such as GraNDAG, GOLEM, NOTEARS, ReScore~\cite{Lachapelle2020Gradient-Based, ng2020golem, zheng2018dags, zheng2020learning, zhang2023boosting}. Despite their strengths, existing approaches have limitations. Specifically, these methods may only find causal structures within an equivalence class, resulting in a limited understanding of the underlying causal relationships. Besides, they also rely on local heuristics for enforcing acyclicity constraints, which can be insufficient for handling large datasets effectively.

There are some hybrid methods that combine the advantages of both types of methods \cite{tsamardinos2006max, chen2021fritl, li2022hybrid, hasan2023survey}. For example, GSP and IGSP algorithms \cite{solus2021consistency,wang2017permutation} evaluate the score of each DAG structure using some information criterion (such as Bayesian information criterion) and searches for the optimal solution by iteratively changing permutations. Additionally, topology-based methods tackle the causal discovery problem by finding a certain topological ordering of the nodes and then pruning the spurious edges in topological ordering~\cite{teyssier2005ordering,peters2014causal, loh2014high, park2017bayesian, ghoshal2018learning, ahammad2021new, sanchez2022diffusion, reisach2023simple}. Examples of topology-based methods include CAM, SCORE and NoGAM~\cite{buhlmann2014cam,rolland2022score,montagna2023causal}. These methods have a less combinatorial problem than other methods, as the set of permutations is much smaller than the set of DAGs. Once a topological order is fixed, the acyclicity constraint is naturally enforced, making the pruning step easier to solve. These methods restricts the number and direction of possible edges in the learned DAG, but typically creates many spurious edges that must be pruned.

To the best of our knowledge, most of traditional time series methods \cite{granger1969investigating, nauta2019causal, bussmann2021neural, lowe2022amortized, assaad2022survey} in causal discovery typically require data observed at a series of time points, covering more than two cross-sectional, and unequal time intervals may lead to causal misidentification. Different with traditional time-series task, we focus on two time-slices for learning DAGs and concatenate two time-slices data with the temporal edge to a single cross-sectional dataset with only one causal graph. Our proposed HT-CIT algorithm is a joint constraint-based and topology-based method that utilizes conditional independence tests with instrumental variables to distinguish between descendant and non-descendant nodes and build a topological graph. By reducing the search space of the underlying DAG, HT-CIT provides a more accurate and efficient solution for causal discovery compared to traditional methods. With knowledge of a cause for each variable, HT-CIT is a promising approach for causal discovery.




\section{Problem setup}

We consider the problem of discovering the causal structure between $d$ variables from two time-slices, when observations are collected over a short period of time. Let $\boldsymbol{X} = \{{X}_{i}^{\tau}\}_{d \times t}$ be a multivariate time series with $d$ variables and $t$ time steps, and the observation of a time series variable ${X}_{i}$ at time $\tau$ is denoted by ${X}_{i}^{\tau}$, where $i=1,2,\cdots,d$ and $\tau=1,2,\cdots,t$. We assume that the true causal structure is represented by a DAG $\mathcal{G}$. 
For each ${X}_{i}^{\tau}$, we use the notation $\mathbf{pa}_{i}^{\tau}$ to refer to the set of parents of ${X}_{i}$ at time-slice $\tau$ in $\mathcal{G}$. Similarity, we define $\mathbf{ch}_{i}^{\tau}$ for the set of child nodes, $\mathbf{an}_{i}^{\tau}$ for the set of ancestors, $\mathbf{sib}_{i}^{\tau}$ for the set of siblings, and $\mathbf{de}_{i}^{\tau}$ for the set of descendants.
As shown in Fig.~\ref{fig:figure1}(a), under \emph{causal Markov condition}, we assume that the structure of the graph can be expressed in the functional relationship, for any node $i=1, 2, \cdots, d$ at time $\tau = 1,2,\cdots, t$: 
\begin{eqnarray}
\label{eq:data}
X_i^{\tau}=f_i\left(\mathbf{pa}_{i}^{\tau}\right)+g_i\left(X_i^{\tau-1}\right)+\epsilon_i^{\tau},
\end{eqnarray} 
where, $f_i\left(\mathbf{pa}_{i}^{\tau}\right)$ is a twice continuously differentiable function in each component, which encode the \emph{instantaneous effect} from its parents $\mathbf{pa}_{i}^{\tau}$; $g_i$ encode the \emph{time-lagged effect} from its previous state $X_i^{\tau-1}$, which must contain a non-zero \emph{1st Order Autoregressive Lagged Effect}; and $\epsilon_i^{\tau}$ is additive noise variables independently drawn from an identical distribution, i.e., \emph{Additive Noise Models}.


\begin{definition}[1st Order Autoregressive Lagged Effect, Definition 9 in \cite{hasan2023survey}]
\label{def:ALE} 
    When the delay a variable's previous lagged value on its current value is a unit of time, then it is known as 1st Order Autoregressive Lagged Effect.
\end{definition}
\vspace{-4pt}

\textbf{Two time-slices data (with optional random intervention)}. To the best of our knowledge, as stated in Definition 7 in \cite{hasan2023survey}, traditional time series methods require a series of time-slices and the observations in multiple time series should be collected over consistent intervals of time. However, as illustrated in Fig.~\ref{fig:figure1}(a), a more practical scenario is that we are only given two (non-)consecutive time-slices, which arise from a cross-section of an arbitrary previous moment ($\boldsymbol{X}^{\tau}$, $\tau \in \{1, 2, ..., t-1\}$) as well as a time slice of the current moment ($\boldsymbol{X}^{t}$). Given these observations $\mathcal{D} = \{ \boldsymbol{X}^{\tau}, \boldsymbol{X}^{t} \}_{\tau < t}$, the task is to identify the underlying DAG (rather than Markov equivalence class) of the causal structure. In addition, it is possible to implement optional random interventions for each variable in the previous state, if any, which will further help us to improve the efficiency of finding topological ordering. 

In general, the identification result of hierarchical topological ordering arises from the conditional independence properties of auxiliary variables in previous state, which can be regarded as \emph{conditional instrumental variable}\footnote{Given $\mathbf{pa}_{i}^{\tau}$, the auxiliary variable $X_i^\tau$ (i.e., conditional instruments) is conditional independent with its non-descendants $\{\mathbf{an}_{i}^{\tau}, \mathbf{sib}_{i}^{\tau}\}$ and only indirectly affects the its descendant nodes $\mathbf{de}_i^t$ (i.e., outcomes) at the current state through its association with $X_i^t$ (i.e., treatments).} (CIV). To discriminate between the descendant and non-descendant nodes of each variable using the conditional independence property, except for standard \emph{causal Markov condition}, \emph{faithfulness} and \emph{causal sufficient} assumptions \cite{peters2014causal}, it is necessary to ensure that the time series remains consistent over time and that the summary graph is acyclic. This is a common practice when studying stationary causal relationships in short-term.

\begin{definition}[Consistency Throughout Time, Definition 7 in \cite{assaad2022survey})]
\label{def:CTT}
    A causal graph $\mathcal{G}$ for a multivariate time series $\boldsymbol{X}$ is said to be consistent throughout time if all the causal relationships remain constant in direction throughout time, while allowing for variability in causal effects.
\end{definition}
\vspace{-4pt}

\begin{definition}[Acyclic Summary Causal Graph, Definition 11 in \cite{assaad2022survey})]
\label{def:ASCG}
    The summary causa graph of a multivariate time series is considered acyclic if the lagged effect of each variable solely affects its own value and its descendants, without any influence on its non-descendants at the current time.
\end{definition}
\vspace{-4pt}

Following these assumption, this model is known to be identifiable from observational data \cite{peters2014causal, buhlmann2014cam}, meaning that it is possible to recover the instantaneous DAG underlying the generative model (Eq.~\eqref{eq:data}). 
In the present work, we will utilize two time-slice data to aid in learning a unique hierarchical topological ordering with a smaller search space compared to other advanced approaches.

\section{Algorithm}
\label{sec:theorem}

In this section, we will introduce the complete topological ordering from classical topology-based approaches \cite{rolland2022score, sanchez2022diffusion} and show how two time-slice data help identify a unique hierarchical topological ordering. 
We first propose HT-CIT, a novel identifiable topological sorting algorithm for hierarchical topological ordering, which applicable to any types of noise. The search space over the learned hierarchical topological ordering is much smaller than that of SCORE. 
Then, the underlying DAGs can be found by pruning the unnecessary edges with a well-defined pruning method \cite{buhlmann2014cam, Lachapelle2020Gradient-Based}.


\subsection{From complete to hierarchical topological ordering}

As illustrated in Fig.~\ref{fig:figure1}(b), the conventional typology-based approach SCORE \cite{rolland2022score,sanchez2022diffusion} perform sequential identification and removal of leaf nodes to generate a complete topological ordering based on the Hessian's diagonal of the data log-likelihood, which often contains many spurious edges.

\begin{definition}[Complete Topological Ordering]
\label{def:CTO}
    The complete topological ordering ($\pi(\boldsymbol{X}) = ( X_{\pi_1}, X_{\pi_2}, \cdots, X_{\pi_d} )$, $\pi_i$ is the reordered index of node) refers to a sorting of all nodes in a DAG such that for any pair of nodes $X_{\pi_i}$ and $X_{\pi_j}$, if $i < j$, then there is a directed edge from $X_{\pi_i}$ to $X_{\pi_j}$. 
\end{definition}
\vspace{-4pt}

However, a complete topological ordering with $d(d-1)/2$ edges is a dense graph that contains numerous spurious edges, many of which point to non-descendants unnecessarily. Moreover, these methods \cite{rolland2022score,sanchez2022diffusion} may not always produce a unique solution, making it challenging to eliminate false edges and resulting in errors when learning DAGs. Fortunately, obtaining two time-slice data within a brief duration cab help identify a unique hierarchical topological ordering (Fig.~\ref{fig:figure1}(c)), in which each edge only points from an ancestor node to its descendant nodes and not to any non-descendant nodes.

\begin{definition}[Hierarchical Topological Ordering]
\label{def:HTO}
In the hierarchical topological ordering e.g., $\Pi(\boldsymbol{X}) = (\{X_{\pi_1}\}_{\boldsymbol{L}_1}, \{X_{\pi_2}, X_{\pi_3}\}_{\boldsymbol{L}_2}, \{X_{\pi_4}, X_{\pi_5}\}_{\boldsymbol{L}_3}, \cdots) $, variables at the same layer are grouped together. Each layer is denoted by $\boldsymbol{L}_i$ and we represent $l_j$ as the located layer of $X_j$. If there is a directed edge from $X_{\pi_i}$ to $X_{\pi_j}$, then $X_{\pi_i}$ is located in a higher layer than $X_{\pi_j}$, i.e., $l_{\pi_i} > l_{\pi_j}$.
\end{definition}
\vspace{-4pt}

Notably, there are multiple different hierarchical topological orderings corresponding to a same DAG. The complete topological ordering also is a special case of the hierarchical topological ordering with each layer consisting of only one node. To obtain a unique hierarchical topological ordering and efficiently implement causal discovery,  we identify the descendant nodes of each variable, and link direct edges to its descendant nodes to build the unique topological ordering on $\boldsymbol{X}^{t}$ (Fig.~\ref{fig:figure1}(c)):
\begin{eqnarray}
\label{eq:desgraph}
X_i^t \rightarrow \mathbf{de}_i^t, \text{ for } i=1,2,\cdots,d. 
\end{eqnarray} 

In the unique hierarchical topological ordering, each node only has a directed edge to each of its descendants and does not point to non-descendants in the ordering. Nest, we will introduce how two time-slice data help identify the topological ordering. 

\subsection{Two time-slices help identify hierarchical topological ordering}

In a consistent time series data throughout time with acyclic summary causal graph, suppose that we are given an observational data with two (non-)consecutive time-slices $\mathcal{D} = \{ \boldsymbol{X}^{\tau}, \boldsymbol{X}^{t} \}_{\tau < t}$, a single conditional independence test per variable is sufficient to distinguish between its descendants and non-descendant nodes in topological ordering. 

\begin{theorem}[Descendant-oriented Conditional Independence Criteria]
\label{th:our}
    Given a two time-slice observations $\mathcal{D} = \{ \boldsymbol{X}^{\tau}, \boldsymbol{X}^{t} \}_{\tau < t}$ from time series satisfying Defs. \ref{def:ALE}, \ref{def:CTT} and \ref{def:ASCG}, for the variables $X_i^{\tau}$ and $X_i^t$, where $i=1,2,\cdots,d$, we have (a) $X_j^t$ is a non-descendant node of $X_i^t$ iff $X_i^{\tau} \indep X_j^t \mid \mathbf{an}_i^{\tau}$, and (b) $X_j^t$ is a descendant node of $X_i^t$ iff $X_i^{\tau} \notindep X_j^t \mid \mathbf{an}_i^{\tau}$. 
\end{theorem}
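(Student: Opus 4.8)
The plan is to reduce the statement to a purely graph-theoretic $d$-separation claim in the time-unrolled DAG and then settle that claim with the ancestral moral-graph criterion. Concretely, I would first build the unrolled graph $\mathcal{G}_{\text{full}}$ on the vertices $\{X_k^s : 1\le k\le d,\ \tau\le s\le t\}$, treating the earliest slice $\boldsymbol{X}^{\tau}$ as a layer of exogenous source variables (this is exactly what the ``observed previous state / optional random intervention'' hypothesis buys us). By Consistency Throughout Time (Def.~\ref{def:CTT}) the within-slice edges are a copy of $\mathcal{G}$ in every slice; by the Acyclic Summary Causal Graph assumption (Def.~\ref{def:ASCG}) together with the structural equations~\eqref{eq:data} the only cross-slice edges are the vertical autoregressive edges $X_k^{s-1}\to X_k^{s}$, which are genuinely present by Def.~\ref{def:ALE}; and since all noises $\epsilon_i^{s}$ are mutually independent, conditional independence among the observed variables is equivalent to $d$-separation in $\mathcal{G}_{\text{full}}$ (the global Markov property gives one direction, the faithfulness hypothesis the other). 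Hence $X_i^{\tau}\indep X_j^t\mid \mathbf{an}_i^{\tau}$ iff $X_i^{\tau}$ and $X_j^t$ are $d$-separated by $\mathbf{an}_i^{\tau}$ in $\mathcal{G}_{\text{full}}$. I would also note that (a) and (b) are the two contrapositive halves of one biconditional (every node is a descendant or a non-descendant) and that the degenerate case $j=i$ is immediate, so it suffices to establish one direction cleanly.

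For the half ``$X_j^t$ is a descendant of $X_i^t$ $\Rightarrow$ the dependence holds'', the witness is explicit: concatenate the vertical path $X_i^{\tau}\to X_i^{\tau+1}\to\cdots\to X_i^{t}$ (present by Def.~\ref{def:ALE}) with a directed $\mathcal{G}$-path $X_i^{t}\to\cdots\to X_j^{t}$ inside slice $t$. This is a directed, collider-free path, and none of its vertices lies in $\mathbf{an}_i^{\tau}$ (its only slice-$\tau$ vertex is the endpoint $X_i^{\tau}$, and $X_i\notin\mathbf{an}_i$ since $\mathcal{G}$ is acyclic), so it is $d$-connecting; faithfulness then yields $X_i^{\tau}\notindep X_j^t\mid\mathbf{an}_i^{\tau}$.

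The substantive half is ``$X_j^t$ a non-descendant of $X_i^t$ $\Rightarrow$ $X_i^{\tau}$ and $X_j^t$ are $d$-separated by $\mathbf{an}_i^{\tau}$''. Here I would pass to the ancestral moral-graph form of $d$-separation. The key observation is that, because $X_j$ is a non-descendant of $X_i$, the ancestral set $\mathrm{An}(\{X_i^{\tau},X_j^t\}\cup\mathbf{an}_i^{\tau})$ contains \emph{no child of $X_i^{\tau}$ at all}: a child of $X_i^{\tau}$ lies either in a child-column $\mathbf{ch}_i$ at time $\tau$, which is a proper descendant of $X_i$ and hence neither an ancestor-or-self of $X_i$ (acyclicity) nor an ancestor-or-self of $X_j$ (else $X_i\in\mathbf{an}_j$), or in the $X_i$-column at time $\tau+1$, which would require $X_i\in\mathbf{an}_j\cup\{X_j\}$, again impossible. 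Meanwhile the \emph{parents} of $X_i^{\tau}$ in this ancestral subgraph are exactly $\mathbf{pa}_i^{\tau}$, which is contained in the conditioning set $\mathbf{an}_i^{\tau}$ — and this is precisely the step that uses ``$\boldsymbol{X}^{\tau}$ is a source slice'': there is no lagged parent $X_i^{\tau-1}$ escaping the conditioning set. Since $X_i^{\tau}$ has no children in the ancestral subgraph it picks up no moralization edge, so all of its neighbours in the moral graph lie in $\mathbf{an}_i^{\tau}$; deleting $\mathbf{an}_i^{\tau}$ isolates $X_i^{\tau}$, so no path joins it to $X_j^t$, giving the $d$-separation and hence the conditional independence.

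The main obstacle is this last half, and inside it the delicate point is the bookkeeping that shows $X_i^{\tau}$ is ``cut off'' once we condition on $\mathbf{an}_i^{\tau}$ — equivalently, that every would-be active path out of $X_i^{\tau}$ either immediately hits a parent (a blocked non-collider in the conditioning set) or is forced into collider configurations, on the $X_i$-column or on ancestor-columns, that $\mathbf{an}_i^{\tau}$ fails to activate. The moral-graph argument packages this cleanly, but it rests essentially on treating $\boldsymbol{X}^{\tau}$ as a root layer; without that (e.g.\ if $\boldsymbol{X}^{\tau}$ carried its own hidden history through $g_i$) a latent common cause $X_i^{\tau-1}$ of $X_i^{\tau}$ and of a sibling of $X_i$ could reopen a path to a non-descendant, so I would be careful to invoke the ``observed previous state / random intervention'' hypothesis exactly where it is needed. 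A secondary, more routine point is the passage from $d$-connection to genuine statistical dependence in the first half, which is where faithfulness (rather than merely the Markov property) enters.
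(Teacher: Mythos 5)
Your proof is correct and reaches the same conclusion by a genuinely different and more formal route. The paper argues directly at the level of the \emph{summary} graph: it enumerates the possible path types between $X_i^{\tau}$ and $X_j^t$ (the backdoor path through $\mathbf{an}_i^{\tau}$, the collider path through $\{X_i^t,\mathbf{de}_i^t\}$, and the sibling variant), checks that conditioning on $\mathbf{an}_i^{\tau}$ blocks the former without activating the latter, and handles the converse of (a) by contradiction using Def.~\ref{def:CTT} to transport a directed path $X_i \dashrightarrow X_j$ back to slice $\tau$. You instead unroll the process into the full time-expanded DAG, reduce both directions to a single $d$-separation claim via the global Markov property and faithfulness, exhibit the same explicit directed witness path for the descendant half, and settle the non-descendant half with the ancestral moral-graph criterion by showing $X_i^{\tau}$ has no children in the relevant ancestral set and all its parents lie in the conditioning set. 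Your version buys rigor: the moral-graph bookkeeping replaces the paper's informal claim that ``there are only two summary paths,'' and it makes explicit (rather than implicit) where faithfulness is invoked. The paper's version buys brevity and stays closer to the back-door intuition a reader expects.

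One point deserves emphasis, and it is to your credit that you isolated it: your argument that $X_i^{\tau}$ is cut off rests on treating $\boldsymbol{X}^{\tau}$ as a source layer, so that $X_i^{\tau}$ has no lagged parent $X_i^{\tau-1}$ escaping $\mathbf{an}_i^{\tau}$. The theorem as stated allows $\tau$ to be an arbitrary earlier slice with its own unobserved history, and in that case a path such as $X_i^{\tau}\leftarrow X_i^{\tau-1}\leftarrow X_k^{\tau-1}\to X_m^{\tau-1}\to\cdots\to X_j^t$ (with $X_k\in\mathbf{pa}_i$ and $X_m$ a non-ancestor, non-descendant of $X_i$) is not blocked by $\mathbf{an}_i^{\tau}$, since only the time-$\tau$ copies of the ancestors are conditioned on. The paper's proof silently absorbs this into the ``only two summary paths'' step; your moral-graph argument makes the dependence on the source-layer (or random-intervention) hypothesis visible. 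If you write this up, state that hypothesis explicitly as part of the theorem's preconditions rather than leaving it to the surrounding prose.
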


\begin{proof}
    (a) From the Def. \ref{def:ALE} of time series, we first obtain a causal path: $X_i^{\tau} \dashrightarrow X_i^t$ ($X_i^{\tau} \rightarrow X_i^{\tau+1:t-1} \rightarrow X_i^t$). The Def. \ref{def:ASCG} shows that there is not a causal path from $X_i^{\tau}$ to $\mathbf{an}_i^t$. If $X_j^t \in \mathbf{an}_i^t$, then there are only two summary path between $X_i^{\tau}$ and $X_j^t$: $X_i^{\tau} \dashleftarrow \mathbf{an}_i^{\tau}  \dashrightarrow X_j^t$ and $X_i^{\tau} \dashrightarrow \{X_i^t, \mathbf{de}_i^{t} \} \dashleftarrow  X_j^t$. Hence, once we control the conditional set $\mathbf{an}_i^{\tau}$, i.e., cut off all backdoor path, then the confounding effect between $X_i^{\tau}$ and $X_j^t$ would be eliminated and $X_i^{\tau} \indep X_j^t \mid \mathbf{an}_i^{\tau}$. Similarity, If $X_j^t \in \mathbf{sib}_i^t$, then the summary backdoor path is $X_i^{\tau} \dashleftarrow \mathbf{an}_i^{\tau}  \dashrightarrow \mathbf{an}_j^t  \dashrightarrow X_j^t$. In summary, if $X_j^t$ is a non-descendant node of $X_i^t$, then $X_i^{\tau} \indep X_j^t \mid \mathbf{an}_i^{\tau}$. In turn, given condition $X_i^{\tau} \indep X_j^t \mid \mathbf{an}_i^{\tau}$, we assume $X_j^t \in \mathbf{de}_i^t$, then we can obtain $X_i^t \dashrightarrow X_j^t$. According Def. \ref{def:CTT}, we will observe a path $X_i^{\tau} \dashrightarrow X_j^{\tau}  \dashrightarrow X_j^t$. Obviously, $X_i^{\tau} \notindep X_j^t \mid \mathbf{an}_i^{\tau}$, which contradicts the initial condition. $X_j^t$ is a non-descendant node of $X_i^t$. (b) If $X_j^t$ is a descendant node of $X_i^t$, we would have $X_i^{\tau} \dashrightarrow X_i^{t}  \dashrightarrow X_j^t$, so $X_i^{\tau} \notindep X_j^t \mid \mathbf{an}_i^{\tau}$. In turn, given the condition $X_i^{\tau} \notindep X_j^t \mid \mathbf{an}_i^{\tau}$, result (a) shows that $X_j^t$ is not a non-descendant node of $X_i^t$. Thus,  $X_j^t$ is a descendant node of $X_i^t$.
\end{proof}

In practical, under unknown causal graph, we can not directly identify ancestor nodes $\mathbf{pa}_{i}^{\tau}$.
Through a simple independence test, so we first select a set of variables $\boldsymbol{X}_{\otimes i}^{\tau}$ includes all variables at time $\tau$, except for $X_i^{\tau}$ and any variables that are independent of $X_i^{\tau}$. This means that each variable in $\boldsymbol{X}_{\otimes i}^{\tau}$ is dependent on $X_i^{\tau}$, i.e., $X_i^{\tau} \not \perp X_j^{\tau}$ for each variable $X_j^{\tau} \in \boldsymbol{X}_{\otimes i}^{\tau}$. 
As $\boldsymbol{X}_{\otimes i}^{\tau}$ occurs prior to time $t$, it does not introduce any additional backdoor paths to non-descendant nodes at time $t$, nor can it block the path $X_i^{\tau} \dashrightarrow X_i^{t} \dashrightarrow X_j^t$. 
Thus, we can directly modify the conditional set in Theorem \ref{th:our} to $\boldsymbol{X}_{\otimes i}^{\tau}$.

\begin{corollary} \label{lemma:our}
Given a two time-slice observations $\mathcal{D} = \{ \boldsymbol{X}^{\tau}, \boldsymbol{X}^{t} \}_{\tau < t}$, for the variables $X_i^{\tau}$ and $X_i^t$, where $i=1,2,\cdots,d$, we have (a) $X_j^t$ is a non-descendant node of $X_i^t$ iff $X_i^{\tau} \indep X_j^t \mid \boldsymbol{X}_{\otimes i}^{\tau}$, and (b) $X_j^t$ is a descendant node of $X_i^t$ iff $X_i^{\tau} \notindep X_j^t \mid \boldsymbol{X}_{\otimes i}^{\tau}$. 
\end{corollary}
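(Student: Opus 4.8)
The plan is to derive Corollary~\ref{lemma:our} from Theorem~\ref{th:our} by showing that replacing the conditioning set $\mathbf{an}_i^{\tau}$ with $\boldsymbol{X}_{\otimes i}^{\tau}$ preserves both directions of the conditional (in)dependence characterization. The key observation, already hinted at in the paragraph preceding the statement, is that $\boldsymbol{X}_{\otimes i}^{\tau}$ consists entirely of variables at the earlier time-slice $\tau$, so conditioning on it cannot (i) open any new backdoor path between $X_i^{\tau}$ and a non-descendant $X_j^t$, nor (ii) block the forward causal path $X_i^{\tau} \dashrightarrow X_i^t \dashrightarrow X_j^t$ when $X_j^t$ is a descendant. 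So the argument is essentially a d-separation bookkeeping exercise on the concatenated two-slice graph.

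First I would set up notation: write the conditioning set as $\boldsymbol{X}_{\otimes i}^{\tau} = (\boldsymbol{X}^{\tau} \setminus \{X_i^{\tau}\}) \setminus \{X_k^{\tau} : X_k^{\tau} \indep X_i^{\tau}\}$, and note $\mathbf{an}_i^{\tau} \subseteq \boldsymbol{X}_{\otimes i}^{\tau}$ generically (every strict ancestor of $X_i^{\tau}$ is marginally dependent on it under faithfulness), but the extra members of $\boldsymbol{X}_{\otimes i}^{\tau}$ may include siblings of $X_i^{\tau}$, descendants of $X_i^{\tau}$ at time $\tau$, and more remote dependent nodes. For part (a), forward direction: assume $X_j^t$ is a non-descendant of $X_i^t$; I would enumerate, exactly as in the proof of Theorem~\ref{th:our}, the summary paths between $X_i^{\tau}$ and $X_j^t$ (the common-ancestor backdoor paths through $\mathbf{an}_i^{\tau}$ and the collider paths meeting at $\{X_i^t, \mathbf{de}_i^t\}$), and argue that conditioning on $\boldsymbol{X}_{\otimes i}^{\tau}$ still blocks every non-collider path (it contains $\mathbf{an}_i^{\tau}$) and does not activate the collider paths: any node of $\boldsymbol{X}_{\otimes i}^{\tau}$ lies at time $\tau < t$, hence is never a descendant of the colliders $\{X_i^t,\mathbf{de}_i^t\}$ which live at time $t$, so no collider is unblocked. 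Adding $X_i^{\tau}$-dependent nodes at time $\tau$ can at worst lie on a backdoor path, and by the acyclic-summary-graph assumption (Def.~\ref{def:ASCG}) none of them sits on a directed path into $X_j^t$'s non-descendant ancestors in a way that creates an active chain. Hence $X_i^{\tau} \indep X_j^t \mid \boldsymbol{X}_{\otimes i}^{\tau}$.

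For part (a), reverse direction, and part (b), I would reuse the contrapositive structure of Theorem~\ref{th:our}: if $X_j^t \in \mathbf{de}_i^t$ then by Consistency Throughout Time (Def.~\ref{def:CTT}) there is a directed path $X_i^{\tau} \dashrightarrow X_i^t \dashrightarrow X_j^t$ (or $X_i^{\tau} \dashrightarrow X_j^{\tau} \dashrightarrow X_j^t$), and this path passes only through variables at times $\geq \tau$ that are strictly downstream of $X_i^{\tau}$; since $\boldsymbol{X}_{\otimes i}^{\tau}$ contains only time-$\tau$ variables and a directed path out of $X_i^{\tau}$ can revisit time $\tau$ only at descendants of $X_i^{\tau}$ (not at $X_i^{\tau}$ itself), conditioning on $\boldsymbol{X}_{\otimes i}^{\tau}$ cannot block this path — one would need to condition on an intermediate node of the path, but the only time-$\tau$ node on it downstream of $X_i^{\tau}$, if present, is itself a chain node whose blocking is prevented because... here is the subtle point. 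Then $X_i^{\tau} \notindep X_j^t \mid \boldsymbol{X}_{\otimes i}^{\tau}$, giving (b) forward and (a) reverse by contraposition; combining with (a) forward yields (b) reverse.

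The main obstacle is precisely the reverse/descendant direction: I must rule out the possibility that $\boldsymbol{X}_{\otimes i}^{\tau}$ contains a time-$\tau$ descendant of $X_i^{\tau}$ that lies \emph{on} the directed path $X_i^{\tau} \dashrightarrow X_j^t$ and thereby blocks it as an intermediate chain node. The resolution is that the relevant directed path can always be routed as $X_i^{\tau} \to X_i^{\tau+1} \to \cdots \to X_i^{t} \dashrightarrow X_j^t$ using only the autoregressive 1st-order lagged edges (Def.~\ref{def:ALE}) plus the final instantaneous descendant segment at time $t$; this path's only time-$\tau$ vertex is $X_i^{\tau}$ itself, which is excluded from $\boldsymbol{X}_{\otimes i}^{\tau}$ by construction, so the path survives conditioning. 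I would also need to handle faithfulness carefully so that "dependent" in the definition of $\boldsymbol{X}_{\otimes i}^{\tau}$ translates to the right d-connection statements, and remark that pathological cancellations are excluded by the standing faithfulness assumption. Everything else is a direct transcription of the Theorem~\ref{th:our} argument with the enlarged but "time-$\tau$-confined" conditioning set.
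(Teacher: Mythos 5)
Your proposal follows the same route as the paper: the paper gives no separate proof environment for the corollary, only the informal paragraph preceding it (``$\boldsymbol{X}_{\otimes i}^{\tau}$ occurs prior to time $t$, so it does not introduce any additional backdoor paths \dots nor can it block the path $X_i^{\tau} \dashrightarrow X_i^{t} \dashrightarrow X_j^t$''), and your write-up is a faithful expansion of that paragraph into a d-separation argument. Your treatment of the descendant direction is actually \emph{more} careful than the paper's: routing the directed path through the autoregressive chain $X_i^{\tau}\to X_i^{\tau+1}\to\cdots\to X_i^{t}$, whose only time-$\tau$ vertex is $X_i^{\tau}$ itself, is exactly the right way to see that the enlarged conditioning set cannot block it.

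However, there is a genuine gap in the forward direction of part (a), and it sits precisely in the sentence you wave through (``Adding $X_i^{\tau}$-dependent nodes at time $\tau$ can at worst lie on a backdoor path\dots''). The enlarged set $\boldsymbol{X}_{\otimes i}^{\tau}$ contains time-$\tau$ \emph{children} of $X_i^{\tau}$, and conditioning on a collider at time $\tau$ can open a path that is neither a backdoor path through $\mathbf{an}_i^{\tau}$ nor a collider path meeting at $\{X_i^t,\mathbf{de}_i^t\}$. Concretely, take the summary graph $i \to k \leftarrow m \to j$ with no other edges. Then $X_j^t$ is a non-descendant of $X_i^t$; marginally $X_m^{\tau}\indep X_i^{\tau}$ and $X_j^{\tau}\indep X_i^{\tau}$, so $\boldsymbol{X}_{\otimes i}^{\tau}=\{X_k^{\tau}\}$; but the path $X_i^{\tau}\to X_k^{\tau}\leftarrow X_m^{\tau}\to X_j^{\tau}\dashrightarrow X_j^t$ is d-connecting given $\{X_k^{\tau}\}$ (the collider $X_k^{\tau}$ is conditioned, its other parent $X_m^{\tau}$ and the chain node $X_j^{\tau}$ are not), so by faithfulness $X_i^{\tau}\notindep X_j^t\mid \boldsymbol{X}_{\otimes i}^{\tau}$, contradicting (a). Your observation that no time-$\tau$ node can be a descendant of the time-$t$ colliders only rules out activation of the \emph{time-$t$} collider paths; it says nothing about colliders inside the conditioning set at time $\tau$. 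To close the gap you would need either to restrict $\boldsymbol{X}_{\otimes i}^{\tau}$ to non-descendants of $X_i^{\tau}$ (which cannot be read off from marginal dependence alone), or to require the set to contain the parents of every conditioned collider; as written, neither your argument nor the paper's paragraph establishes the corollary.
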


Based on the corollary \ref{lemma:our}, then, we can distinguish between descendant and non-descendant nodes of each variable $X_i$ by a single conditional independence test per variable ($X_i^{\tau} \not \perp \mathbf{de}_i^t \mid \boldsymbol{X}_{\otimes i}^{\tau}$). Interesting, an optional random intervention on $X_i^{\tau}$ can be integrated directly into this corollary. Once we use a random intervention to replace the previous state values of some variables, the conditional set in corollary \ref{lemma:our} will become an empty set because the random intervention is independent of the other variables, and the conditional independence test in corollary \ref{lemma:our} can be replaced by a simple independence test, which will effectively accelerate the search for hierarchical topological ordering.

\subsection{HT-CIT algorithm}
\label{sec:alg}


\subsubsection{Identifying hierarchical topological ordering}
\label{sec:HT-CIT}

From the identification results from corollary \ref{lemma:our}, suppose we are given two time-slices $\mathcal{D} = \{ \boldsymbol{X}^{\tau}, \boldsymbol{X}^{t} \}_{\tau < t}$, then we can construct the conditional set $\boldsymbol{X}_{\otimes i}^{\tau}$ via a simple independence test $\boldsymbol{X}_{\otimes i}^{\tau} = \{ X_j^{\tau} \mid X_j^{\tau} \perp X_i^{\tau} \}$. Given $\boldsymbol{X}_{\otimes i}^{\tau}$, we can distinguish between descendant and non-descendant nodes of each variable $X_i$ by a single conditional independence test per variable ($X_i^{\tau} \not \perp \mathbf{de}_i^t \mid \boldsymbol{X}_{\otimes i}^{\tau}$). 
For every $i, j \in \{1, 2, \cdots, d\}$, we calculate the conditional independence significance $\boldsymbol{P}$ using the conditional HSIC test from \cite{DBLP:conf/uai/ZhangPJS11} with Gaussian kernel, and determine that $X_i$ is a descendant of $X_j$ if the reported $p$-value is less than or equal to a threshold $\alpha$ (i.e. if $X_i^{\tau} \not \perp X_j^t \mid \boldsymbol{X}_{\otimes i}^{\tau}$, then $X_i \rightarrow X_j$). Then, we can obtain the adjacency matrix of the unique hierarchical  topological ordering by:
\begin{eqnarray}
\label{eq:condtionalmatrix8}
\boldsymbol{P} = 
\begin{pmatrix}
p_{1,1} & p_{1,2} & \cdots & p_{1,d} \\
p_{2,1} & p_{2,2} & \cdots & p_{2,d} \\
\vdots & \vdots & \ddots & \vdots    \\
p_{d,1} & p_{d,2} & \cdots & p_{d,d} 
\end{pmatrix},~ 
\boldsymbol{A}^{TP} = 
\begin{pmatrix}
\mathbb{I}({p_{1,1} \leq \alpha}) & \mathbb{I}({p_{1,2} \leq \alpha}) & \cdots & \mathbb{I}({p_{1,d} \leq \alpha}) \\
\mathbb{I}({p_{2,1} \leq \alpha}) & \mathbb{I}({ p_{2,2} \leq \alpha}) & \cdots & \mathbb{I}({p_{2,d} \leq \alpha}) \\
\vdots & \vdots & \ddots & \vdots    \\
\mathbb{I}({p_{d,1} \leq \alpha}) & \mathbb{I}({p_{d,2} \leq \alpha}) & \cdots & \mathbb{I}({p_{d,d} \leq \alpha}) 
\end{pmatrix}.
\end{eqnarray} 
where $p_{i,j} = \mathbf{HSIC}(X_i^{\tau}, X_j^t \mid \boldsymbol{X}_{\otimes i}^{\tau})$, $\alpha$ is a hyper-parameter denoting significance threshold, and $\mathbb{I}(\cdot)$ is the indicator function. If the $p$-value is less than $\alpha$, the result is considered significant and an edge is added in the hierarchical topological ordering.
In statistical hypothesis testing, $\alpha$ is typically set to 0.05 or 0.01. In this paper, we set the hyper-parameter $\alpha=0.01$ as the default.

Despite the significant advancements in the development of conditional independence \cite{DBLP:conf/uai/ZhangPJS11,runge2018conditional, bellot2019conditional,runge2019detecting}, testing for conditional independence remains a challenging task, particularly in high-dimensional variables. The conditional independence test (HSIC) may yield imprecise decisions in implementation. Failure to accurately identify the topological ordering may result in cycles in the graph, leading to biased causal discovery. To alleviate this issue, we propose topological layer adjustment as a double guarantee for acyclic constraints in causal discovery, providing an additional safeguard against incorrect estimations in conditional independence testing.

\subsubsection{Adjusting the topological ordering}
\label{sec:layerording}
To ensure the accuracy of the conditional independence test and prevent the presence of cycles in the topological ordering,  we propose topological layer adjustment to rectify the cycle graph in ordering.

\textbf{Finding leaf nodes in the bottom layer of the topological ordering}. 
We iteratively identify the leaf nodes in the bottom layer of hierarchical topological ordering, following the principle that leaf nodes must have no descendant nodes. Therefore, if $X_i^t$ is a leaf node at the current time, then $\boldsymbol{X}_{M_{i}} \perp X_i^t \mid \boldsymbol{X}_{\otimes i}^{\tau}$ holds true, where $\boldsymbol{X}_{M_{i}} = \{\boldsymbol{X}^{\tau}/X_i^{\tau}\}$ represents all variables at time $\tau$ except for $X_i^{\tau}$.
Then we set $k=1$, then all leaf nodes would be placed into the set of layer $\boldsymbol{L}_k$, i.e., if $\boldsymbol{X}_{M_{i}} \perp X_i^t \mid \boldsymbol{X}_{\otimes i}^{\tau}$, then $X_i^t \in \boldsymbol{L}_k$.


By repeating this operation, we can iteratively make $k := k + 1$ and identify the leaf nodes in the current bottom layer $\boldsymbol{L}_k$: 
\begin{eqnarray}
    X_i^t \in \boldsymbol{L}_k, \text{ if } a_{i,j}^{TP}=0 \text{ for all }j \in M_{i,k},  
\end{eqnarray} 
where $X_{M_{i,k}} = \{\boldsymbol{X}^{\tau}/X_i^{\tau}, \boldsymbol{L}_{1:k-1}\}$ denotes all variables at time $\tau$, except for $X_i^{\tau}$ and the variables in lower layer $\boldsymbol{L}_{1:k-1}$. Then $M_{i,k}$ is the index of variables $\{\boldsymbol{X}^{\tau}/X_i^{\tau}, \boldsymbol{L}_{1:k-1}\}$.


\textbf{Ensuring acyclic constraints}.
An error can occur due to the difficulty of performing the conditional independence test in high-dimension variables, and the HSIC test results can sometimes be inaccurate. This can result in cycles in the topological ordering, which makes it impossible to identify any leaf node at the current topological graph since all nodes have at least one descendant node. To ensure acyclic constraints and rectify the edges in topological ordering, if the causal relationship between the unprocessed nodes in topological ordering is a directed cyclic graph, then we locate the maximum $p$-value that is less than $\alpha$ and reassign it to a value of $2\alpha$ and delete this edge in topological ordering.
\begin{eqnarray}
\label{eq:while}
p_{i^*, j^*} := 2\alpha \quad \text{and} \quad a_{i^*, j^*}^{TP}=0, \quad (i^*, j^*) = \arg \max_{i,j} (p_{i,j} \leq \alpha),  
\end{eqnarray} 
we repeat this operation until a new leaf node is identified. By adjusting the $p$-value, the layer sorting leads to a more precise hierarchical topological ordering $\boldsymbol{A}^{TP} = \{a_{i,j}^{TP}\}_{d \times d}$. This ensures that the topological ordering of the graph is acyclic and improves the accuracy of topological ordering.


\subsubsection{Pruning spurious edges}
\label{sec:prune}
\vspace{-2pt}

In line with topology-based work \cite{rolland2022score, sanchez2022diffusion}, once a topological ordering is estimated, the underlying DAG is a sub-graph of the topological graph. It is necessary to further prune incorrect edges for the true DAG. Theoretically, the conditional independence between the hierarchical topological layer ordering allows for a pruning process that only requires one higher layer's nodes, current layers' nodes and two lower layers' nodes as the conditional set, or the node's non-descendants and one lower layer's nodes as the conditional set, to examine if a spurious edge exists between a node and its descendant nodes. However, classical methods such as CAM appears to perform better in practice \cite{buhlmann2014cam}, which use significance testing based on generalized additive models and select cause if the $p$-values are less than or equal to 0.001. Like \cite{rolland2022score}, we use the CAM pruning algorithm for every baseline model to prune the spurious edges. The full pseudo-code are placed in Algorithm \ref{algorithm} in Appendix \ref{app:pesudo}.

\vspace{-2pt}
\section{Numerical experiments}
\label{sec:experiments}
\vspace{-2pt}

\subsection{Baselines and evaluation}
\label{sec:baseline}
\label{sec:metrics}
\vspace{-2pt}

In this paper, we focus on two time-slices for learning directed Acyclic graphs and concatenate two time-slices data with a known temporal edge to a single concatenate dataset with a single summary causal graph. Then we apply the proposed algorithm (\textbf{HT-CIT}) to both synthetic and real-world data and compare its performance to the following baselines: 
constraint-based methods, \textbf{PC} and \textbf{FCI}~\cite{spirtes2000causation};
score-based methods, \textbf{GES}~\cite{chickering2002optimal};
continuous-optimization, \textbf{GraNDAG}~\cite{Lachapelle2020Gradient-Based}, \textbf{GOLEM}~\cite{ng2020golem}, \textbf{NOTEARS} with MLP~\cite{zheng2020learning}, and \textbf{ReScore}~\cite{zhang2023boosting}; time-seires method, \textbf{CD-NOD} \cite{huang2020causal}; topology-based methods, \textbf{CAM}~\cite{buhlmann2014cam} and \textbf{SCORE}~\cite{rolland2022score}. Besides, once a random intervention is implemented to previous time-slice, then we can use a simple independence test (IT), i.e., \textbf{HT-IT}, to replace the costly conditional independence in the proposed \textbf{HT-CIT}. 

To evaluate the performance of the proposed \textbf{HT-CIT}, we compute the Structural Hamming Distance (\textbf{SHD}) between the output and the true DAG to evaluate the differences in terms of the number of nodes, edges, and connections present in two graphs. Besides, we use Structural Intervention Distance (\textbf{SID}) to counts the minimum number of interventions needed to transform the output DAG into the true DAG, or vice versa. The accuracy of the identified edges can also be evaluated through the use of commonly adopted metrics \textbf{F1-Score} and L2-distance (\textbf{Dis.}) between two graphs.
 
In the two process of topology-based methods, SCORE~\cite{rolland2022score} typically produce a complete topological ordering with $d(d-1)/2$ edges, many of which point to non-descendants unnecessarily that must be pruned. The proposed HT-ICT use a single conditional independence test per variable to build a more efficient unique hierarchical topological ordering with merely a few spurious edges. If the number of pruned edges in the topological ordering is smaller, it can significantly improve both the efficiency and accuracy of leanred DAG. As a comparison among topology-based methods we count the number of spurious edges that needed to be pruned for each method, which is represented by \textbf{\#Prune}. 


\vspace{-2pt}
\subsection{Experiments on synthetic data}
\label{sec:complex}
\vspace{-2pt}


\textbf{Datasets}. We test our algorithm on synthetic data generated from a \emph{additive non-linear noise model} (Eq.~\ref{eq:data}) with Defs. \ref{def:ALE}, \ref{def:CTT} and \ref{def:ASCG}. For a fixed number of nodes $d$ and edges $e$, we generate the causal graph, represented by a DAG $\mathcal{G}$, using the Erdos-Renyi model \cite{erdos2011evolution}. In main experiments, we generate the data with Gaussian Noise for every variable $X_i^{\tau}$, $i=1,2,\cdots, d$ at time $\tau = 1,2,\cdots, t$:
\begin{eqnarray}
\label{eq:dataEX14}
X_i^{\tau}=\text{Sin}\left(\mathbf{pa}_{i}^{\tau}\right)+\text{Sin}\left(X_i^{\tau-1}\right)+\epsilon_i^{\tau}, \quad \boldsymbol{X}^0 \sim \mathcal{N}\left(0, \mathrm{I}_{d}\right), \quad \boldsymbol{\epsilon}^{\tau} \sim  \mathcal{N}\left(0, 0.4 \cdot \mathrm{I}_{d}\right),
\end{eqnarray} 
where $\text{Sin} (\mathbf{pa}_{i}^{\tau}) = \sum_{j \in \mathrm{pa}(X_i)} \sin (X_j ^{\tau})$, and $\mathrm{I}_{d}$ is a $d$ order identity matrix. In this scenario, we set $\boldsymbol{X}^{0} \sim \mathcal{N}\left(0, \mathrm{I}_{d}\right) $ to simulate the random intervention in time-slice $\tau = 0$.

To evaluate our HT-CIT on a wide range of scenarios, we vary the number of nodes ($d$) and edges ($e$) of the sampled graph and use Sin-$d$-$e$ to denote the synthetic dataset with $d$ nodes and $e$ edges. Moreover, to test the robustness of the algorithm against different noise type, we also generate data with Laplace noise ($X_i^0 \sim \operatorname{Laplace}(0,1), \boldsymbol{\epsilon}^\tau \sim \operatorname{Laplace}(0,1 / \sqrt{2})$) and Uniform noise ($X_i^0 \sim U(-1,1),  \epsilon^\tau \sim U(-1,1)$).  In each experiment setting, we perform 10 replications, each with sample size 1000, to report the mean and the standard deviation of metrics mentioned in Sec.~\ref{sec:metrics}. 

Additional, experiments on exploring \textbf{complex non-linear relationships} and on \textbf{large graph with 50/100-dimension variables} are deferred to Appendix \ref{app:nonliearn} and \ref{app:high-dimension}.


\textbf{Exploring the influence of underlying DAG's size and sparsity on varying Sin-$d$-$e$ experiments}.  The results of the synthetic experiments are shown in Tables \ref{tab:Table1} and \ref{tab:Table2}. From the results on sparser graphs (Sin-10-10 and Sin-20-20) in Table \ref{tab:Table1}, we have the following observation: 
(1) In non-linear time-series data, given a concatenation of two (non-)consecutive time-slices, causal sufficient may be violated and the time dependency is complex, resulting in that the time-series variants of PC, FCI, GES and GraNDAG fail to accurately identify causal graphs.  
(2) One promising time-series method is CD-NOD, which perform PC algorithm for causal discovery on the augmented data set with time label that captures the unobserved changing factors. However, it only provides an equivalence class of the causal graph, hindering the exploration of true causality. 
(3) The three methods (GOLEM, NOTEARS and ReScore) designed specifically for sparse graphs perform well on Sin-10-10 and Sin-20-20, even exceeding the proposed HT-CIT on Sin-10-10 with pure observational data ($\mathcal{D} = \{\boldsymbol{X}^1, \boldsymbol{X}^2\}$). 
(4) As topology-based methods, CAM and SCORE achieve highly accurate causal graph. Comparing the performance on two different data types, SCORE recovered almost full causal graphs on interventional data ($\mathcal{D} = \{\boldsymbol{X}^0, \boldsymbol{X}^1\}$), but there is a drop in performance on observational data ($\mathcal{D} = \{\boldsymbol{X}^1, \boldsymbol{X}^2\}$). Because observational data contains intricate causal relationships present in previous state. 
(5) The proposed HT-CIT build a hierarchical topological ordering with merely a few spurious edges. The search space over the learned hierarchical topological ordering is much smaller than that of SCORE. On average, compared to SCORE, the number of pruned edges in HT-CIT decreases 24.4 for Sin-10-10 and 147.6 for Sin-20-20.
As the underlying DAG's size increases, HT-CIT achieves unbiased causal discovery on interventional data, but there may be a slight decrease on observational data, i.e., merely an error edge on average, and F1-Score still exceed 95\%. 

In the experiments on denser graph with more edges ($e=2d$ and $e=3d$), we selectively report on a few of the best performing baselines on observational data ($\mathcal{D} = \{\boldsymbol{X}^1, \boldsymbol{X}^2\}$) in Table \ref{tab:Table2}. Most previous baselines were only applicable to sparse graphs, whereas our algorithm exhibits substantial improvements on dense graphs. In comparison to the best baseline, our algorithm boasts a 48\% increase in SHD, a 48\% increase in SID, and a 15\% boost in F1-Score on Sin-10-20, and boasts a 30\% increase in SHD, a 43\% increase in SID, and a 7\% boost in F1-Score on Sin-10-30.


\begin{table}
  \caption{The results (mean$_{\pm std}$ ) on sparse graph Sin-$d$-$e$ with simulated interventional data ( $\mathcal{D} = \{\boldsymbol{X}^0, \boldsymbol{X}^1\}$ ) or pure observational data ( $\mathcal{D} = \{\boldsymbol{X}^1, \boldsymbol{X}^2\}$ ). }
  \label{tab:Table1}
  \centering
  \resizebox{\linewidth}{!}{
  \begin{tabular}{c|ccccc|ccccc}
    \toprule
    & \multicolumn{5}{c|}{\bf Sin-10-10 Graph with Interventional Data ( $\mathcal{D} = \{\boldsymbol{X}^0, \boldsymbol{X}^1\}$ )} & \multicolumn{5}{c}{\bf Sin-10-10 Graph with Observational Data ( $\mathcal{D} = \{\boldsymbol{X}^1, \boldsymbol{X}^2\}$ )} \\
    \midrule
    Method & \textbf{SHD$\downarrow$} & \textbf{SID$\downarrow$} & \textbf{F1-Score$\uparrow$} & \textbf{Dis.$\downarrow$} & \textbf{\#Prune$\downarrow$}  & \textbf{SHD$\downarrow$} &  \textbf{SID$\downarrow$}  & \textbf{F1-Score$\uparrow$} & \textbf{Dis.$\downarrow$} & \textbf{\#Prune$\downarrow$}  \\
    \midrule
    PC & 5.90$_{\pm 3.28}$ & 34.7$_{\pm 20.8}$ & 0.77$_{\pm 0.11}$ & 2.32$_{\pm 0.74}$ & - & 12.8$_{\pm 5.03}$ & 43.6$_{\pm 9.94}$ & 0.56$_{\pm 0.12}$ & 3.51$_{\pm 0.72}$ & - \\
    FCI & 9.70$_{\pm 2.87}$ & 58.9$_{\pm 17.3}$ & 0.67$_{\pm 0.07}$ & 3.08$_{\pm 0.48}$ & - & 15.3$_{\pm 3.77}$ & 71.0$_{\pm 11.5}$ & 0.54$_{\pm 0.09}$ & 3.89$_{\pm 0.46}$ & - \\
    GES & 8.60$_{\pm 4.97}$ & 34.8$_{\pm 19.4}$ & 0.65$_{\pm 0.20}$ & 2.81$_{\pm 0.89}$ & - & 12.3$_{\pm 6.83}$ & 41.5$_{\pm 20.1}$ & 0.61$_{\pm 0.19}$ & 3.37$_{\pm 1.01}$ & - \\
    CD-NOD & 3.00$_{\pm 3.16}$ & 11.3$_{\pm 13.3}$ & 0.86$_{\pm 0.14}$ & 1.28$_{\pm 1.16}$ & - & 5.40$_{\pm 0.92}$ & 15.5$_{\pm 4.70}$ & 0.74$_{\pm 0.04}$ & 2.32$_{\pm 0.19}$ & - \\
    GraNDAG & 7.80$_{\pm 2.57}$ & 25.9$_{\pm 8.13}$ & 0.60$_{\pm 0.14}$ & 2.76$_{\pm 0.47}$ & - & 19.0$_{\pm 3.74}$ & 56.1$_{\pm 4.33}$ & 0.40$_{\pm 0.10}$ & 4.34$_{\pm 0.43}$ & - \\
    GOLEM & \bf 0.00$_{\pm 0.00}$ & \bf 0.00$_{\pm 0.00}$ & \bf 1.00$_{\pm 0.00}$ & \bf 0.00$_{\pm 0.00}$ & - & \bf 0.50$_{\pm 0.80}$ & 1.80$_{\pm 2.70}$ & \bf 0.97$_{\pm 0.03}$ & \bf 0.38$_{\pm 0.59}$ & - \\
    NOTEARS & \bf 0.00$_{\pm 0.00}$ & \bf 0.00$_{\pm 0.00}$ & \bf 1.00$_{\pm 0.00}$ & \bf 0.00$_{\pm 0.00}$ & - & 1.20$_{\pm 0.60}$ & 2.30$_{\pm 1.20}$ & 0.94$_{\pm 0.02}$ & 1.02$_{\pm 0.30}$ & - \\
    ReScore & \bf 0.00$_{\pm 0.00}$ & \bf 0.00$_{\pm 0.00}$ & \bf 1.00$_{\pm 0.00}$ & \bf 0.00$_{\pm 0.00}$ & - & 1.00$_{\pm 0.63}$ & \bf 1.40$_{\pm 1.36}$ & 0.95$_{\pm 0.03}$ & 0.88$_{\pm 0.47}$ & - \\
    CAM & 5.00$_{\pm 6.27}$ & 14.9$_{\pm 18.5}$ & 0.78$_{\pm 0.27}$ & 1.53$_{\pm 1.72}$ & 80.00$_{\pm 0.00}$ & 3.70$_{\pm 2.95}$ & 13.2$_{\pm 10.6}$ & 0.84$_{\pm 0.13}$ & 1.79$_{\pm 0.74}$ & 80.00$_{\pm 0.00}$ \\
    SCORE & 1.20$_{\pm 3.46}$ & 4.2$_{\pm 10.7}$ & 0.95$_{\pm 0.14}$ & 0.43$_{\pm 1.06}$ & 35.30$_{\pm 0.95}$ & 5.60$_{\pm 3.92}$ & 21.2$_{\pm 16.1}$ & 0.78$_{\pm 0.14}$ & 2.25$_{\pm 0.78}$ & 35.80$_{\pm 0.98}$ \\
    \midrule
    \textbf{HT-CIT} & \bf 0.00$_{\pm 0.00}$ & \bf 0.00$_{\pm 0.00}$ & \bf 1.00$_{\pm 0.00}$ & \bf 0.00$_{\pm 0.00}$ & \bf 9.00$_{\pm 2.65}$ & 1.00$_{\pm 1.22}$ & 3.20$_{\pm 3.70}$ & 0.95$_{\pm 0.05}$ & 0.68$_{\pm 0.72}$ & \bf 13.20$_{\pm 4.30}$ \\
    \bottomrule
    \toprule
    Method & \multicolumn{5}{c|}{\bf Sin-20-20 Graph with Interventional Data ( $\mathcal{D} = \{\boldsymbol{X}^0, \boldsymbol{X}^1\}$ )} & \multicolumn{5}{c}{\bf Sin-20-20 Graph with Observational Data ( $\mathcal{D} = \{\boldsymbol{X}^1, \boldsymbol{X}^2\}$ )} \\
    \midrule
    PC & 10.7$_{\pm 5.70}$ & 61.2$_{\pm 35.6}$ & 0.79$_{\pm 0.10}$ & 3.18$_{\pm 0.83}$ & - & 21.5$_{\pm 6.75}$ & 98.2$_{\pm 31.8}$ & 0.61$_{\pm 0.11}$ & 4.59$_{\pm 0.69}$ & - \\
    FCI & 20.1$_{\pm 3.03}$ & 181.$_{\pm 49.9}$ & 0.66$_{\pm 0.05}$ & 4.47$_{\pm 0.35}$ & - & 30.5$_{\pm 4.09}$ & 237.$_{\pm 59.1}$ & 0.54$_{\pm 0.05}$ & 5.51$_{\pm 0.37}$ & - \\
    GES & 9.40$_{\pm 3.06}$ & 53.6$_{\pm 25.1}$ & 0.80$_{\pm 0.06}$ & 3.03$_{\pm 0.50}$ & - & 17.3$_{\pm 5.23}$ & 73.1$_{\pm 35.7}$ & 0.68$_{\pm 0.08}$ & 4.11$_{\pm 0.69}$ & - \\
    CD-NOD & exceed 48h & - & - & - & - & - & - & - & - & - \\
    GraNDAG & 17.9$_{\pm 5.04}$ & 62.8$_{\pm 36.3}$ & 0.55$_{\pm 0.11}$ & 4.20$_{\pm 0.57}$ & - & 40.6$_{\pm 7.89}$ & 190.$_{\pm 46.2}$ & 0.38$_{\pm 0.08}$ & 6.34$_{\pm 0.63}$ & - \\
    GOLEM & 0.60$_{\pm 1.50}$ & 2.50$_{\pm 5.20}$ & 0.98$_{\pm 0.04}$ & 0.32$_{\pm 0.70}$ & - & 1.30$_{\pm 1.10}$ & 5.60$_{\pm 4.40}$ & 0.97$_{\pm 0.03}$ & 0.93$_{\pm 0.66}$ & - \\
    NOTEARS & 0.20$_{\pm 0.40}$ & 1.00$_{\pm 2.0}$ & 0.99$_{\pm 0.01}$ & 0.20$_{\pm 0.40}$ & - & 2.60$_{\pm 1.49}$ & 6.00$_{\pm 3.40}$ & 0.94$_{\pm 0.03}$ & 1.55$_{\pm 0.46}$ & - \\
    ReScore & 0.90$_{\pm 2.70}$ & 3.90$_{\pm 11.7}$ & 0.98$_{\pm 0.06}$ & 0.30$_{\pm 0.90}$ & - & 2.00$_{\pm 0.77}$ & 5.10$_{\pm 2.90}$ & 0.95$_{\pm 0.01}$ & 1.38$_{\pm 0.28}$ & - \\
    CAM & 4.50$_{\pm 3.03}$ & 15.8$_{\pm 14.2}$ & 0.89$_{\pm 0.07}$ & 1.86$_{\pm 1.07}$ & 360.0$_{\pm 0.00}$ & 10.3$_{\pm 6.50}$ & 41.6$_{\pm 34.7}$ & 0.79$_{\pm 0.12}$ & 3.07$_{\pm 0.98}$ & 360.0$_{\pm 0.00}$ \\
    SCORE & 0.20$_{\pm 0.63}$ & 0.90$_{\pm 2.70}$ & 0.99$_{\pm 0.02}$ & 0.14$_{\pm 0.45}$ & 170.1$_{\pm 0.32}$ & 7.40$_{\pm 2.41}$ & 31.3$_{\pm 21.7}$ & 0.85$_{\pm 0.04}$ & 2.68$_{\pm 0.47}$ & 172.1$_{\pm 0.22}$ \\
    \midrule
    \textbf{HT-CIT} & \bf 0.00$_{\pm 0.00}$ & \bf 0.00$_{\pm 0.00}$ & \bf 1.00$_{\pm 0.00}$ & \bf 0.00$_{\pm 0.00}$ & \bf 16.44$_{\pm 3.81}$ & \bf 1.00$_{\pm 1.32}$ & \bf 3.10$_{\pm 4.40}$ & \bf 0.98$_{\pm 0.03}$ & \bf 0.51$_{\pm 0.61}$ & \bf 30.60$_{\pm 7.70}$ \\
    \bottomrule
  \end{tabular}
  }
  \vspace{-8pt}
\end{table}

\begin{table}
  \caption{The results (mean$_{\pm std}$ ) on denser graph Sin-$d$-$e$ with observations ( $\mathcal{D} = \{\boldsymbol{X}^1, \boldsymbol{X}^2\}$ ). }
  \label{tab:Table2}
  \centering
  \resizebox{\linewidth}{!}{
  \begin{tabular}{c|ccccc|ccccc}
    \toprule
    & \multicolumn{5}{c|}{\bf Sin-10-20 Graph with Observational Data ( $\mathcal{D} = \{\boldsymbol{X}^1, \boldsymbol{X}^2\}$ )} & \multicolumn{5}{c}{\bf Sin-10-30 Graph with Observational Data ( $\mathcal{D} = \{\boldsymbol{X}^1, \boldsymbol{X}^2\}$ )} \\
    \midrule
    Method & \textbf{SHD$\downarrow$} & \textbf{SID$\downarrow$} & \textbf{F1-Score$\uparrow$} & \textbf{Dis.$\downarrow$} & \textbf{\#Prune$\downarrow$}  & \textbf{SHD$\downarrow$} &  \textbf{SID$\downarrow$}  & \textbf{F1-Score$\uparrow$} & \textbf{Dis.$\downarrow$} & \textbf{\#Prune$\downarrow$}  \\
    \midrule
    GOLEM & 16.4$_{\pm 3.13}$ & 60.6$_{\pm 7.7}$ & 0.51$_{\pm 0.09}$ & 4.03$_{\pm 0.41}$ & - & 22.3$_{\pm 4.20}$ & 61.4$_{\pm 13.60}$ & 0.50$_{\pm 0.09}$ & 4.70$_{\pm 0.44}$ & - \\
    NOTEARS& 18.5$_{\pm 3.50}$ & 60.0$_{\pm 8.2}$ & 0.54$_{\pm 0.09}$ & 4.30$_{\pm 0.42}$ & - & 23.4$_{\pm 5.30}$ & 62.7$_{\pm 13.10}$ & 0.55$_{\pm 0.10}$ & 4.80$_{\pm 0.57}$ & - \\
    ReScore & 17.5$_{\pm 4.08}$ & 57.1$_{\pm 9.65}$ & 0.54$_{\pm 0.11}$ & 4.15$_{\pm 0.51}$ & - & 22.9$_{\pm 5.60}$ & 61.1$_{\pm 13.00}$ & 0.54$_{\pm 0.11}$ & 4.70$_{\pm 0.63}$ & - \\
    CAM & 9.80$_{\pm 4.76}$ & 37.1$_{\pm 11.3}$ & 0.75$_{\pm 0.12}$ & 3.03$_{\pm 0.82}$ & 70.00$_{\pm 0.00}$ & 25.6$_{\pm 5.93}$ & 60.4$_{\pm 9.480}$ & 0.59$_{\pm 0.09}$ & 5.03$_{\pm 0.60}$ & 60.00$_{\pm 0.00}$ \\
    SCORE & 16.0$_{\pm 4.92}$ & 53.6$_{\pm 10.6}$ & 0.62$_{\pm 0.11}$ & 3.95$_{\pm 0.68}$ & 31.90$_{\pm 2.02}$ & 20.3$_{\pm 7.17}$ & 51.1$_{\pm 19.14}$ & 0.68$_{\pm 0.11}$ & 4.43$_{\pm 0.86}$ & 21.70$_{\pm 2.21}$ \\
    \midrule
    \textbf{HT-CIT} & \bf 5.10$_{\pm 3.25}$ & \bf 16.1$_{\pm 6.82}$ & \bf 0.86$_{\pm 0.09}$ & \bf 2.14$_{\pm 0.77}$ & \bf 10.20$_{\pm 3.97}$ & \bf 14.1$_{\pm 3.73}$ & \bf 29.0$_{\pm 10.3}$ & \bf 0.73$_{\pm 0.07}$ & \bf 3.73$_{\pm 0.49}$ & \bf 9.40$_{\pm 3.98}$ \\
    \bottomrule
  \end{tabular}
  }
  \vspace{-14pt}
\end{table}

\begin{minipage}{\textwidth}
 \begin{minipage}[t]{0.44\textwidth}
     \centering
    \makeatletter\def\@captype{table}\makeatother\caption{Average running time(s). }
     \label{tab:Table3}
     \resizebox{\linewidth}{!}{
        \renewcommand{\arraystretch}{1.42}
        \begin{tabular}{c|cccc}
        \toprule
        & Sin-10-10 & Sin-10-20 & Sin-10-30 & Sin-20-20 \\
        \midrule
        CD-NOD & 5433s & > 1.5h & > 1.5h & >5h  \\
        GraNDAG & 343.3s & 327.4s & 447.7s & 864.7s \\
        CAM & 97.2s & 92.8s & 111.3s & 543.6s \\
        \textbf{HT-CIT} & 54.7s & 77.1s & 58.2s & 224.4s \\
        GOLEM & 44.9s & 45.8s & 47.6s & 63.0s \\
        SCORE & 40.7s & 38.1s & 45.2s & 193.1s \\
        NOTEARS& 33.6s & 35.6s & 36.4s & 747.2s \\
        GES & 32.7s & 33.1s & 32.1s & 71.5s \\
        ReScore & 24.1s & 23.2s & 24.2s & \bf 29.2s \\
        PC & 21.1s & 20.7s & 21.1s & 32.8s \\
        FCI & 18.7s & 18.7s & 18.8s & 30.1s \\
        \textbf{HT-IT} & \bf 12.0s & \bf 16.8s & \bf 12.7s & 33.5s \\
        \bottomrule
        \end{tabular}
    }
  \end{minipage}
  \hfill
  \begin{minipage}[t]{0.533\textwidth}
   \centering
    \makeatletter\def\@captype{table}\makeatother\caption{The experiments on different noise type. }
    \label{tab:Table4}
        \resizebox{\linewidth}{!}{
          \begin{tabular}{c|ccccc}
            \toprule
            & \multicolumn{5}{c}{\bf Sin-10-10 data with Laplace noise ($\mathcal{D} = \{\boldsymbol{X}^1, \boldsymbol{X}^2\}$)} \\
            \midrule
            Method & \textbf{SHD$\downarrow$} & \textbf{SID$\downarrow$} & \textbf{F1-Score$\uparrow$} & \textbf{Dis.$\downarrow$} & \textbf{\#Prune$\downarrow$}  \\
            \midrule
            GOLEM & 1.50$_{\pm 1.20}$ & \bf 2.80$_{\pm 2.52}$ & 0.92$_{\pm 0.05}$ & 1.00$_{\pm 0.70}$ & - \\
            NOTEARS& 1.60$_{\pm 0.06}$ & 3.70$_{\pm 3.10}$ & 0.92$_{\pm 0.03}$ & 1.23$_{\pm 0.26}$ & - \\
            ReScore & 2.00$_{\pm 1.34}$ & 3.00$_{\pm 2.41}$ & 0.90$_{\pm 0.06}$ & 1.29$_{\pm 0.57}$ & - \\
            CAM & 5.30$_{\pm 2.83}$ & 14.0$_{\pm 8.01}$ & 0.78$_{\pm 0.12}$ & 2.23$_{\pm 0.57}$ & 80.0$_{\pm 0.00}$ \\
            SCORE & 3.90$_{\pm 1.70}$ & 9.90$_{\pm 6.01}$ & 0.84$_{\pm 0.06}$ & 1.93$_{\pm 0.43}$ & 35.5$_{\pm 0.92}$ \\
            \midrule
            \textbf{HT-CIT} & \bf 1.20$_{\pm 1.99}$ & 3.60$_{\pm 6.55}$ & \bf 0.94$_{\pm 0.04}$ & \bf 0.59$_{\pm 0.92}$ & \bf 0.80$_{\pm 1.40}$ \\
            \bottomrule
            \toprule
            Method  & \multicolumn{5}{c}{\bf Sin-10-10 data with Uniform noise ($\mathcal{D} = \{\boldsymbol{X}^1, \boldsymbol{X}^2\}$)} \\
            \midrule
            GOLEM & 2.60$_{\pm 1.80}$ & 6.80$_{\pm 3.94}$ & 0.89$_{\pm 0.06}$ & 1.46$_{\pm 0.68}$ & - \\
            NOTEARS & 2.00$_{\pm 1.34}$ & 4.80$_{\pm 1.30}$ & 0.91$_{\pm 0.05}$ & 1.29$_{\pm 0.57}$ & - \\
            ReScore  & 1.70$_{\pm 0.90}$ & 3.70$_{\pm 2.90}$ & 0.92$_{\pm 0.04}$ & 1.21$_{\pm 0.48}$ & - \\
            CAM & 8.90$_{\pm 7.15}$ & 21.4$_{\pm 12.0}$ & 0.68$_{\pm 0.22}$ & 2.14$_{\pm 0.73}$ & 80.0$_{\pm 0.00}$ \\
            SCORE & 5.10$_{\pm 3.42}$ & 13.6$_{\pm 8.30}$ & 0.80$_{\pm 0.11}$ & 2.14$_{\pm 0.73}$ & 35.0$_{\pm 0.00}$ \\
            \midrule
            \textbf{HT-CIT} & \bf 1.00$_{\pm 2.19}$ & \bf 1.10$_{\pm 2.47}$ & \bf 0.96$_{\pm 0.09}$ & \bf 0.44$_{\pm 0.90}$ & \bf 0.70$_{\pm 1.55}$ \\
            \bottomrule
          \end{tabular}
          }     
   \end{minipage}
\end{minipage}

\begin{figure}[t]
\centerline{\includegraphics[width=0.9\columnwidth]{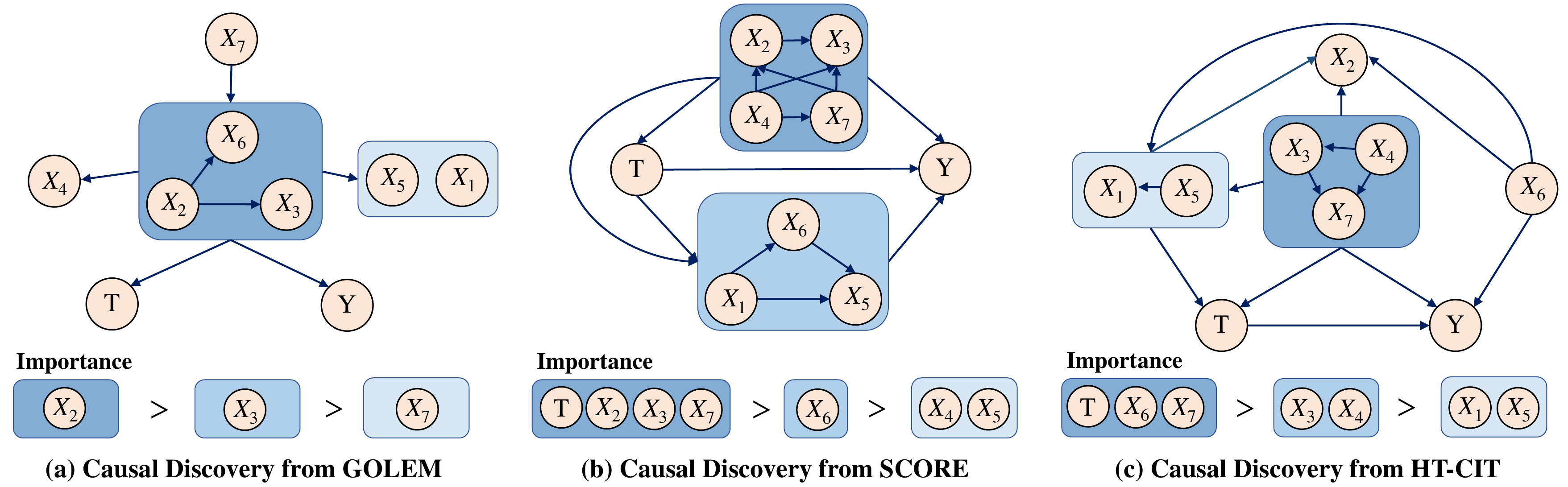}}
\caption{Causal discovery on PM-CMR dataset.}
\label{fig:figure2}
\vspace{-20pt}
\end{figure}


\textbf{Scaling to different noise types}. To evaluate algorithm robustness against different noise types, Sin-10-10 data was generated with Laplace and Uniform noise. Results (Table \ref{tab:Table4}) show HT-CIT's superiority and robustness across noise types and the accuracy remains similar for Gaussian noise.

\textbf{Training cost analysis}.
In all synthetic datasets Sin-$d$-$e$, we implement 10 replications to study the average running time(s) for the proposed model in a single execution and sorted it by time spent on sin-10-10 in Table \ref{tab:Table3}. In the \textbf{HT-CIT} implementation, we use an independence test (\textbf{HT-IT}) for random intervention, which will be easier to perform and take less time than conditional independence test (\textbf{HT-CIT}) for pure observational data. From the results, the time consumption of the proposed $\textbf{HT-IT}$ is almost minimal on most of the datasets. Although conditional independence test of \textbf{HT-CIT} would increase model complexity and training cost, its single execution time is less than 300 seconds, which is still within the acceptable range. We believe \textbf{HT-CIT} is scalable to larger graphs and denser graphs, applicable to a wide range of scenarios, and its time consumption is at a low level.

\vspace{-4pt}
\subsection{Real-world data}
\label{sec:real}
\vspace{-4pt}

We apply GOLEM \cite{ng2020golem}, SCORE \cite{rolland2022score} and the proposed HT-CIT to public PM-CMR data \cite{wyatt2020dataset} (9 nodes $\{T, Y, X_{1:7}\}$, 2132 observations, the detailed description of PM-CMR is deferred to Appendix \ref{app:real}) in 2000 \& 2010, and recover a CMR-related causal graph. As illustrated in Fig.~\ref{fig:figure2} (the adjacency matrix of DAG is placed in Appendix \ref{app:real}), it can be inferred that the DAG recovered by HT-CIT contains more accurate information. In prior studies \cite{wyatt2020dataset}, $\{X_1, X_2, \cdots, X_7\}$ were thought to be confounders in the causal relation of $T$ to $Y$. However, in Figs. \ref{fig:figure2}(a-b), GOLEM shows there are no direct edge from $T$ to $Y$ and SCORE show that $T$ is the parent node of $\{X_1, X_5, X_6\}$, which contradicts previous research, which seems inconsistent with the settings in causal effect literature.


\vspace{-4pt}
\section{Conclusion}
\label{sec:con}
\vspace{-4pt}

In a consistent time series data throughout time with acyclic summary causal graph, we show how two time-slice help topological ordering for learning DAGs. In such cases,  we propose HT-CIT, a novel topological sorting algorithm that utilizes conditional independence tests per node to distinguish between its descendant and non-descendant nodes and build a unique hierarchical topological ordering with a few spurious edges for identifying DAGs. One limitation is that we require two time-slices are collected over a short period of time to maintain acyclic summary graph. Besides, challenges arise when applying our algorithm to larger graphs due to the difficulty of conditional independence test.

\bibliographystyle{abbrv}
\bibliography{ref.bib}


\newpage
\appendix

\section{Pseudo-code and experiments}

\subsection{Pseudo-code}
\label{app:pesudo}

In a consistent time series data throughout time with acyclic summary causal graph, we show how two time-slice help topological ordering for learning DAGs. In such cases,  we propose HT-CIT, a novel topological sorting algorithm that utilizes conditional independence tests per node to distinguish between its descendant and non-descendant nodes and build a unique hierarchical topological ordering with a few spurious edges for identifying DAGs.  Algorithm \ref{algorithm} shows the pseudo-code of our HT-CIT. 

Hardware used: Ubuntu 16.04.3 LTS operating system with 2 * Intel Xeon E5-2660 v3 @ 2.60GHz CPU (40 CPU cores, 10 cores per physical CPU, 2 threads per core), 256 GB of RAM, and 4 * GeForce GTX TITAN X GPU with 12GB of VRAM.
    
Software used: Python 3.8 with cdt 0.6.0, ylearn 0.2.0, causal-learn 0.1.3, GPy 1.10.0, igraph 0.10.4, scikit-learn 1.2.2, networkx 2.8.5, pytorch 2.0.0.

\begin{algorithm}[h]
    \caption{HT-CIT: Hierarchical Topological Ordering with Conditional Independence Test}
    \label{algorithm}
    \begin{algorithmic}
	\STATE \textbf{Input:} Two time-slices $\mathcal{D} = \{ \boldsymbol{X}^{\tau}, \boldsymbol{X}^{t} \}_{\tau < t}$ with $d$ nodes; two significance threshold $\alpha=0.01$ and $\beta=0.001$ for conditional independence test and pruning process; the layer index $k=0$.  
	\STATE \textbf{Output:} One adjacency matrix of hierarchical topological ordering $\boldsymbol{A}^{TP}$, one directed acyclic graph $\mathcal{G}$. 
	\STATE \textbf{Components:} Conditional independence test $\mathbf{HSIC}(\dots)$; and pruning process $\mathbf{CAM}(\cdots)$. 
	\STATE \textbf{Stage 1 - Identifying Hierarchical Topological Ordering:}
	\FOR{$i=1$ {\bfseries to} $d$}
        \STATE Construct the conditional set $\boldsymbol{X}_{\otimes i}^{\tau}$ via a simple independence test $\boldsymbol{X}_{\otimes i}^{\tau} = \{ X_j^{\tau} \mid X_j^{\tau} \perp X_i^{\tau} \}$
        \FOR{$j=1$ {\bfseries to} $d$}
	       \STATE $p_{i,j} = \mathbf{HSIC}(X_i^{\tau}, X_j^t \mid \boldsymbol{X}_{\otimes i}^{\tau})$
	       \STATE $a_{i,j}^{TP} = \mathbb{I}({p_{i,j} \leq \alpha})$
        \ENDFOR
	\ENDFOR
    \STATE We obtain $\boldsymbol{P} = \{p_{i,j}\}_{d \times d}$ and $\boldsymbol{A}^{TP} = \{a_{i,j}^{TP}\}_{d \times d}$
	\STATE \textbf{Stage 2 - Adjusting the Topological Ordering:}
	\WHILE{The causal relationship between the unprocessed nodes is a directed cyclic graph}
        \STATE $k := k + 1$
        \STATE $X_{M_{i,k}} = \{X^{\tau}/X_i^{\tau}, \boldsymbol{L}_{1:k-1}\}$ 
        \STATE $X_i^t \in \boldsymbol{L}_k$, if $a_{i,j}^{TP}=0$ for all $j \in M_{i,k}$
        \WHILE{$\boldsymbol{L}_k = \emptyset $}
	      \STATE $p_{i^*, j^*} := 2\alpha \quad \text{and} \quad a_{i^*, j^*}^{TP}=0, \quad (i^*, j^*) = \arg \max_{i,j} (p_{i,j} \leq \alpha)$
            \STATE $X_i^t \in \boldsymbol{L}_k$, if $a_{i,j}^{TP}=0$ for all $j \in M_{i,k}$
        \ENDWHILE
	\STATE We obtain $\boldsymbol{P} = \{p_{i,j}\}_{d \times d}$ and $\boldsymbol{A}^{TP} = \{a_{i,j}^{TP}\}_{d \times d}$
	\ENDWHILE
    \STATE \textbf{Stage 3 - Pruning Spurious Edges:}
    \STATE We obtain $\mathcal{G} = \mathbf{CAM}(\mathcal{D}, \boldsymbol{A}^{TP}, \beta)$
    \STATE \textbf{Return: $\boldsymbol{A}^{TP}$ and $\mathcal{G}$}
\end{algorithmic}
\end{algorithm}

\subsection{The experiments on more complex non-linear relationships}
\label{app:nonliearn}

\textbf{Datasets}. We test our algorithm on synthetic data generated from a \emph{additive non-linear noise model} (Eq.~\ref{eq:data}) with Defs. \ref{def:ALE}, \ref{def:CTT} and \ref{def:ASCG}. For a fixed number of nodes $d$ and edges $e$, we generate the causal graph, represented by a DAG $\mathcal{G}$, using the Erdos-Renyi model \cite{erdos2011evolution}. In this experiments, we generate the data with Gaussian Noise for every variable $X_i^{\tau}$, $i=1,2,\cdots, d$ at time $\tau = 1,2,\cdots, t$:
\begin{eqnarray}
X_i^{\tau}=f_i\left(\mathrm{pa}(X_i^{\tau})\right)+g_i\left(X_i^{\tau-1}\right)+\epsilon_i^{\tau}, \quad \boldsymbol{X}^0 \sim \mathcal{N}\left(0, \mathrm{I}_{d}\right), \quad \boldsymbol{\epsilon}^{\tau} \sim  \mathcal{N}\left(0, 0.4 \cdot \mathrm{I}_{d}\right),
\end{eqnarray} 
where $f_i$ is a twice continuously differentiable arbitrary function in each component, $g_i$ is an arbitrary function for $X_i^{\tau-1}$, and $\mathrm{I}_{d}$ is a $d$ order identity matrix. 
To simulate real-world data as much as possible, we design 3 different twice continuously differentiable non-linear functions $\text{non-linear}(\cdot)$ to discuss the performance of the HTS-CIT algorithm:
\begin{eqnarray}
\label{eq:nonlinear}
\text{Sin} (\mathrm{pa}(X_i^{\tau})) & = & \sum_{j \in \mathrm{pa}(X_i)} \sin (X_j ^{\tau}), \\
\text{Sigmoid} (\mathrm{pa}(X_i^{\tau})) & = & \sum_{j \in \mathrm{pa}(X_i)} \frac{3}{1+\exp{(-X_j^\tau)}}, \\
\text{Poly} (\mathrm{pa}(X_i^{\tau})) & = & \sum_{j \in \mathrm{pa}(X_i)} \frac{1}{10} \left(X_j ^{\tau} + 2\right)^2.
\end{eqnarray} 
In this paper, we use Sin-$d$-$e$ to denote the synthetic dataset generated by non-linear function $\text{Sin}(\cdot)$ with $d$ nodes and $e$ edges: 
\begin{eqnarray}
X_i^{\tau}=\text{Sin}\left(\mathrm{pa}(X_i^{\tau})\right)+\text{Sin}\left(X_i^{\tau-1}\right)+\epsilon_i^{\tau}. 
\end{eqnarray} 
Similarly, we define Sigmoid-$d$-$e$ and Poly-$d$-$e$.

\textbf{Results}. To simulate real-world data as much as possible, we design 2 additional non-linear functions to test the performance of our HT-CIT, i.e., Sigmoid-$d$-$e$ \& Poly-$d$-$e$. The results (Table \ref{tab:Table5}) demonstrate that our HT-CIT remains superior for other complex nonlinear functions with low error edges in identifying causal graphs. In addition, the number of spurious edges that must be pruned in the hierarchical topological ordering is also minimal compared to CAM and SCORE. 

\begin{table}
  \caption{The experiments on Sigmoid-10-10 \& Poly-10-10 with observatios ( $\mathcal{D} = \{\boldsymbol{X}^1, \boldsymbol{X}^2\}$ )}
  \label{tab:Table5}
  \centering
  \scalebox{0.82}{
  \begin{tabular}{c|ccccc}
    \toprule
    & \multicolumn{5}{c}{\bf Sigmoid-10-10 data with observational data ( $\mathcal{D} = \{\boldsymbol{X}^1, \boldsymbol{X}^2\}$ )}\\
    \midrule
    Method & \textbf{SHD$\downarrow$} & \textbf{SID$\downarrow$}& \textbf{F1-Score$\uparrow$} & \textbf{Dis.$\downarrow$} & \textbf{\#Prune$\downarrow$}  \\
    \midrule
    GOLEM & 4.30$_{\pm 2.19}$ & 18.4$_{\pm 7.92}$ & 0.78$_{\pm 0.11}$ & 2.00$_{\pm 0.51}$ & - \\
    NOTEARS & 12.5$_{\pm 5.40}$ & 45.3$_{\pm 17.9}$ & 0.46$_{\pm 0.21}$ & 3.44$_{\pm 0.78}$ & - \\
    ReScore & 12.2$_{\pm 4.30}$ & 45.6$_{\pm 14.4}$ & 0.45$_{\pm 0.17}$ & 3.43$_{\pm 0.63}$ & - \\
    CAM & 3.70$_{\pm 3.43}$ & 10.4$_{\pm 7.86}$ & 0.82$_{\pm 0.17}$ & 1.55$_{\pm 1.20}$ & 80.00$_{\pm 0.00}$ \\
    SCORE & 9.90$_{\pm 3.81}$ & 32.8$_{\pm 11.6}$ & 0.56$_{\pm 0.16}$ & 3.09$_{\pm 0.61}$ & 38.90$_{\pm 1.60}$ \\
    \midrule
    \textbf{HT-CIT} & \bf 0.67$_{\pm 1.12}$ & \bf 1.80$_{\pm 2.99}$ & \bf 0.96$_{\pm 0.06}$ & \bf 0.46$_{\pm 0.72}$ & \bf 8.67$_{\pm 2.92}$ \\
    \bottomrule
    \toprule
    Method & \multicolumn{5}{c}{\bf Poly-10-10 data with observational data ( $\mathcal{D} = \{\boldsymbol{X}^1, \boldsymbol{X}^2\}$ )}\\
    \midrule
    GOLEM & 19.00$_{\pm 4.00}$ & 59.4$_{\pm 13.6}$ & 0.20$_{\pm 0.12}$ & 4.33$_{\pm 0.45}$ & - \\
    NOTEARS & 17.8$_{\pm 5.36}$ & 56.4$_{\pm 16.9}$ & 0.23$_{\pm 0.18}$ & 4.16$_{\pm 0.64}$ & - \\
    ReScore & 17.7$_{\pm 4.73}$ & 57.3$_{\pm 14.1}$ & 0.22$_{\pm 0.15}$ & 4.16$_{\pm 0.56}$ & - \\
    CAM & 8.00$_{\pm 4.69}$ & 19.8$_{\pm 7.88}$ & 0.63$_{\pm 0.21}$ & 2.68$_{\pm 0.95}$ & 80.00$_{\pm 0.00}$ \\
    SCORE & 18.90$_{\pm 4.33}$ & 40.4$_{\pm 10.9}$ & 0.23$_{\pm 0.13}$ & 4.32$_{\pm 0.52}$ & 42.20$_{\pm 1.48}$ \\
    \midrule
    \textbf{HT-CIT} & \bf 3.22$_{\pm 3.15}$ & \bf 10.8$_{\pm 5.69}$ & \bf 0.84$_{\pm 0.15}$ & \bf 1.51$_{\pm 1.03}$ & \bf 11.33$_{\pm 3.87}$ \\
  \bottomrule
  \end{tabular}
  }
\end{table}

\subsection{The experiments on large graph with high-dimension variables}
\label{app:high-dimension}

\textbf{Datasets}. Followed the data generation process (Eq.~\eqref{eq:dataEX14}) in Section \ref{sec:complex} in the main text.
For a fixed number of nodes $d$ and edges $e$, we generate the causal graph, represented by a DAG $\mathcal{G}$, using the Erdos-Renyi model. 
\begin{eqnarray}
X_i^{\tau}=\text{Sin}\left(\mathbf{pa}_{i}^{\tau}\right)+\text{Sin}\left(X_i^{\tau-1}\right)+\epsilon_i^{\tau}, \quad \boldsymbol{X}^0 \sim \mathcal{N}\left(0, \mathrm{I}_{d}\right), \quad \boldsymbol{\epsilon}^{\tau} \sim  \mathcal{N}\left(0, 0.4 \cdot \mathrm{I}_{d}\right)
\end{eqnarray} 
where $\text{Sin} (\mathbf{pa}_{i}^{\tau}) = \sum_{j \in \mathrm{pa}(X_i)} \sin (X_j ^{\tau})$, and $\mathrm{I}_{d}$ is a $d$ order identity matrix. 
To evaluate our HT-CIT on a larger graph with 50/100 nodes, we vary the number of nodes ($d$) and edges ($e$) of the sampled graph and generate Sin-50-50 and Sin-100-100.

For the CIVs, although theoretically, HT-CIT can achieve unbiased estimation, it is limited by the performance of conditional independence tests. 
For the conditional instrumental variables described above, we calculate the conditional independencies using the conditional independence HSIC test from \cite{DBLP:conf/uai/ZhangPJS11} with Gaussian kernel. However, as the data dimension increases, the accuracy of the HSIC test decreases, leading to incorrect topological orderings generated by HT-CIT. To mitigate this issue, given two-time slices  ( $\mathcal{D} = \{\boldsymbol{X}^1, \boldsymbol{X}^2\}$ ), we implement random intervention to some nodes in the previous states of two time-slices, and then apply HT-CIT to identify potential directed acyclic graphs. Based on the percentage of intervened nodes in the previous state, we refer to it as \textbf{HT-CIT(50\% Intervention)}, \textbf{HT-CIT(80\% Intervention)}, and \textbf{HT-CIT(100\% Intervention)}. Note that the time consumption of NOTEARS-MLP and CAM increases substantially (exceeds 5000s) as the graph size increases, thus, in this experiments, we do not discuss NOTEARS-MLP and CAM.

\textbf{Results}. 
From the results on larger graphs (Sin-50-50 and Sin-100-100) in Table \ref{tab:Table6}, we have the following observation: 
(1) GOLEM and ReScore fails to identify the ture DAG on larger graphs; (2) In terms of pruning efficiency, HT-CIT outperforms SCORE by providing a smaller hierarchical topological ordering. The number of edges to be pruned in the topological ordering learned by SCORE is at least 10 times greater that of the proposed HT-CIT, which greatly increases the workload for subsequent pruning processes;
(3) With at least 50\% intervened nodes in the previous stats, HT-CIT(50\% Intervention) can produce results that are comparable to the most advanced methods SCORE. As the proportion of intervened nodes in the previous stats increases (exceed 80\%), our approach (HT-CIT(80\% Intervention) and HT-CIT(100\% Intervention)) will gradually outperform SCORE. Two time-slices with random intervention will help to improve the identification of the topological ordering of the underlying DAG.

\begin{table}
  \caption{The experiments on Sin-50-50 \& Sin-100-100 datasets. }
  \label{tab:Table6}
  \centering
  \resizebox{\linewidth}{!}{
  \begin{tabular}{c|ccccc|c}
    \toprule
    & \multicolumn{6}{c}{\bf Sin-50-50 data with Gauss noise ( $\mathcal{D} = \{\boldsymbol{X}^1, \boldsymbol{X}^2\}$ )} \\
    \midrule
    Method & \textbf{SHD$\downarrow$} & \textbf{SID$\downarrow$} & \textbf{F1-Score$\uparrow$} & \textbf{Dis.$\downarrow$} & \textbf{\#Prune$\downarrow$} & \textbf{Running Time(s)$\downarrow$} \\
    \midrule
    GOLEM & 87.9$_{\pm 11.0}$ & 846.5$_{\pm 166}$ & 0.24$_{\pm 0.10}$ & 9.35$_{\pm 0.58}$ & - & 1049.1s\\
    ReScore & 83.5$_{\pm 7.17}$ & 1044$_{\pm 65.4}$ & 0.31$_{\pm 0.06}$ & 9.13$_{\pm 0.39}$ & - & 455.2s \\
    SCORE & 17.4$_{\pm 6.17}$ & 91.4$_{\pm 49.7}$ & 0.85$_{\pm 0.04}$ & 4.11$_{\pm 0.74}$ & 1175$_{\pm 0.32}$ & \bf 143.3s\\
    \midrule
    \bf HT-CIT(50\% Intervention) & 21.3$_{\pm 6.85}$ & 102.$_{\pm 64.3}$ & 0.81$_{\pm 0.05}$ & 4.56$_{\pm 0.64}$ & 175.$_{\pm 8.17}$ & 829.1s \\
    \bf HT-CIT(80\% Intervention) & 18.5$_{\pm 5.62}$ & 97.2$_{\pm 39.0}$ & 0.84$_{\pm 0.05}$ & 4.11$_{\pm 0.57}$ & 86.6$_{\pm 7.52}$ & 625.7s\\
    \bf HT-CIT(100\% Intervention) & \bf 16.4$_{\pm 4.40}$ & \bf 88.4$_{\pm 37.1}$ & \bf 0.86$_{\pm 0.04}$ & \bf 4.00$_{\pm 0.58}$ & \bf 58.8$_{\pm 8.52}$ & 327.1s \\
    \bottomrule
    \toprule
    Method  & \multicolumn{6}{c}{\bf Sin-100-100 data with Gauss noise ( $\mathcal{D} = \{\boldsymbol{X}^1, \boldsymbol{X}^2\}$ )} \\
    \midrule
    GOLEM & 160.6$_{\pm 17.2}$ & 1898$_{\pm 764.1}$ & 0.25$_{\pm 0.08}$ & 12.6$_{\pm 0.68}$ & - & 3904.2s\\
    ReScore  & 163.3$_{\pm 13.7}$ & 4009$_{\pm 549.3}$ & 0.34$_{\pm 0.05}$ & 12.7$_{\pm 0.55}$ & - & 578.2s\\
    SCORE & 32.6$_{\pm 4.71}$ & 211.4$_{\pm 39.7}$ & 0.86$_{\pm 0.02}$ & 5.70$_{\pm 0.42}$ & 4450$_{\pm 0.47}$ & \bf 149.2s \\
    \midrule
    \bf HT-CIT(50\% Intervention) & 37.6$_{\pm 6.30}$ & 219.4$_{\pm 64.3}$ & 0.83$_{\pm 0.02}$ & 6.11$_{\pm 0.49}$ & 306.4$_{\pm 17.6}$ & 3204.2s \\
    \bf HT-CIT(80\% Intervention) & 29.5$_{\pm 6.26}$ & 180.9$_{\pm 47.2}$ & 0.87$_{\pm 0.02}$ & 5.40$_{\pm 0.56}$ & 248.0$_{\pm 27.0}$ & 1187.2s \\
    \bf HT-CIT(100\% Intervention) & \bf 26.7$_{\pm 6.80}$ & \bf 160.5$_{\pm 55.3}$ & \bf 0.88$_{\pm 0.03}$ & \bf 5.13$_{\pm 0.63}$ & \bf 197.1$_{\pm 24.8}$ & 761.2s \\
    \bottomrule
  \end{tabular}
  }
\end{table}

\subsection{Real-world dataset}
\label{app:real}

\textbf{PM-CMR}\footnote{PM-CMR:https://pasteur.epa.gov/uploads/10.23719/1506014/SES\_PM25\_CMR\_data.zip} \cite{wyatt2020dataset} study the impact of $PM_{2.5}$ particle level on the cardiovascular mortality rate (CMR) in 2132 counties in the US using the data provided by the National Studies on Air Pollution and Health. As a real application, we use the 9 variables ($PM_{2.5}$ ($T$), CMR ($Y$), unemployment ($X_1$), income ($X_2$), female householder ($X_3$), vacant housing ($X_4$), owner-occupied housing ($X_5$), educational attainment ($X_6$), and poverty families ($X_7$)) in 2000 \& 2010 as observations for causal discovery. The corresponding description of each variable is detailed in Table~\ref{tab:description4}.

\textbf{Results}. We apply GOLEM \cite{ng2020golem}, SCORE \cite{rolland2022score} and the proposed HT-CIT to public PM-CMR data in 2000 \& 2010, and recover a CMR-related causal graph. As illustrated in Fig.~\ref{fig:figure2} and \ref{fig:figure6}, it can be inferred that the DAG recovered by HT-CIT contains more accurate information. In prior studies \cite{wyatt2020dataset}, $\{X_1, X_2, \cdots, X_7\}$ were thought to be confounders in the causal relation of $T$ to $Y$. However, in Figs. \ref{fig:figure2}(a-b), GOLEM shows there are no direct edge from $T$ to $Y$ and SCORE show that $T$ is the parent node of $\{X_1, X_5, X_6\}$, which contradicts previous research, which seems inconsistent with the settings in causal effect literature.

In the experiments on denser graph with more edges ($e=2d$ and $e=3d$), we selectively report on a few of the best performing baselines on observational data ($\mathcal{D} = \{\boldsymbol{X}^1, \boldsymbol{X}^2\}$) in Table \ref{tab:Table2}. Most previous baselines were only applicable to sparse graphs, whereas our algorithm exhibits substantial improvements on dense graphs. In comparison to the best baseline, our algorithm boasts a 48\% increase in SHD, a 48\% increase in SID, and a 15\% boost in F1-Score on Sin-10-20, and boasts a 30\% increase in SHD, a 43\% increase in SID, and a 7\% boost in F1-Score on Sin-10-30. 

Notably, on denser graphs (Table \ref{tab:Table2}), our algorithm demonstrates significant improvement. In comparison to the best baseline, our algorithm boasts a 48\% increase in SHD, a 48\% increase in SID, and a 15\% boost in F1-Score on Sin-10-20, and boasts a 30\% increase in SHD, a 43\% increase in SID, and a 7\% boost in F1-Score on Sin-10-30.  Most previous baselines were only applicable to sparse graphs, whereas our algorithm exhibits substantial improvements on dense graphs.  Therefore, we believe that HT-CIT provides a more precise DAG for the PM-CMR dataset. Therefore, to effectively combat cardiovascular disease, it is recommended for cities to disseminate information about its dangers, promote prevention, and provide medical care for low-income families.

\begin{table}[t]
\caption{The Description for Real Variables on PM-CMR Dataset.}
\label{tab:description4}
\vskip 0.15in
\centering
\resizebox{\linewidth}{!}{
\begin{tabular}{lc}
\toprule
Variable & Description  \\
\midrule
$PM_{2.5}$($T$) & Annual county PM2.5 concentration, $\mu \mathrm{g} / \mathrm{m}^3$ \\
CMR($Y$) & Annual county cardiovascular mortality rate, deaths/100,000 person-years \\
\midrule
Unemploy($X_1$) & Civilian labor force unemployment rate in 2010 \\
Income($X_2$) & Median household income in 2009 \\
Female($X_3$) & Family households - female householder, no spouse present in 2010  / Family households in 2010 \\
Vacant($X_4$) & Vacant housing units in 2010 / Total housing units in 2010 \\
Owner($X_5$) & Owner-occupied housing units - percent of total occupied housing units in 2010 \\
Edu($X_6$) & Educational attainment - persons 25 years and over - high school graduate (includes equivalency) in 2010 \\
Poverty($X_7$) & Families below poverty level in 2009 \\
\bottomrule
\end{tabular}
}
\end{table}

\begin{figure}[t]
\centerline{\includegraphics[width=0.9\columnwidth]{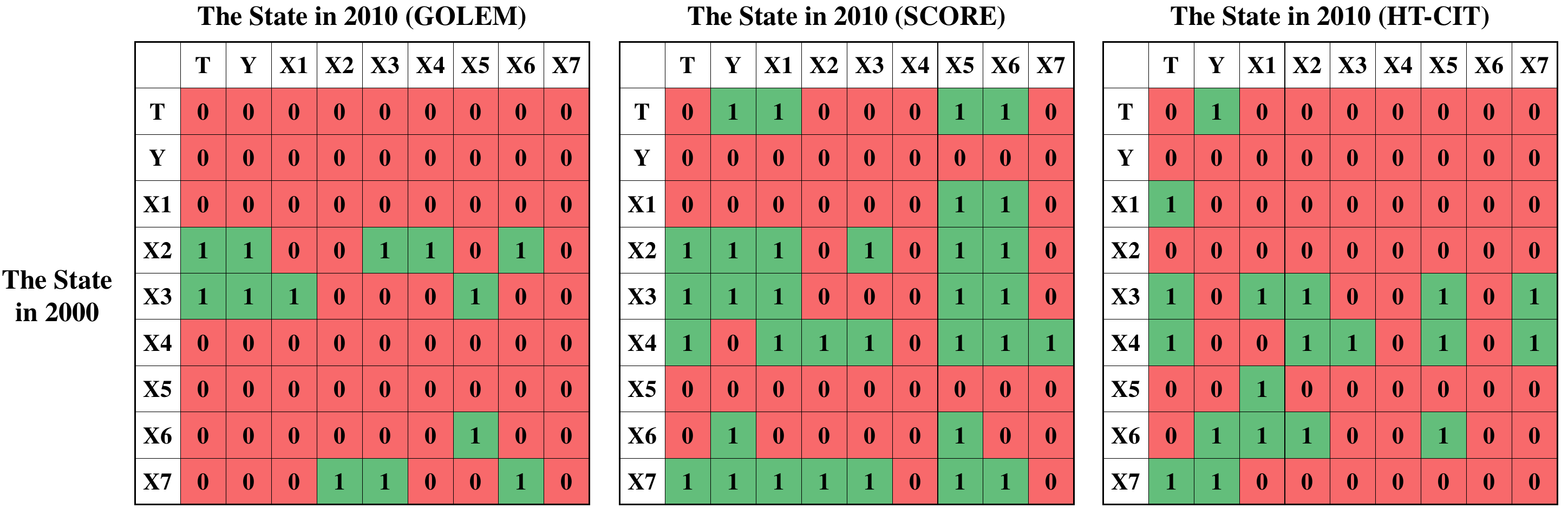}}
\caption{Learned Adjacency Matrix on PM-CMR Dataset.}
\label{fig:figure6}
\end{figure}

\end{document}